\definecolor{LightBlue}{RGB}{80,160,255}
\definecolor{rightblue}{RGB}{76,114,176} 
\definecolor{rightorange}{RGB}{221,132,82} 
\definecolor{aliceblue}{rgb}{0.94, 0.97, 1.0} 
\definecolor{darkcerulean}{rgb}{0.03, 0.27, 0.49} 
\definecolor{iris}{rgb}{0.35, 0.31, 0.81} 
\definecolor{carmine}{rgb}{0.59, 0.0, 0.09} 
\definecolor{green(munsell)}{rgb}{0.0, 0.66, 0.47} 
\definecolor{celadon}{rgb}{0.67, 0.88, 0.69} 
\definecolor{bluerow}{rgb}{0.0, 0.53, 0.74} 
\definecolor{lightorange}{RGB}{255, 219, 187} 
\definecolor{lavenderblue}{rgb}{0.8, 0.8, 1.0}
\definecolor{blue(pigment)}{rgb}{0.2, 0.2, 0.6}
\definecolor{blue-violet}{rgb}{0.54, 0.17, 0.89}
\definecolor{blueseaborn}{RGB}{1,115,178}
\definecolor{orangeseaborn}{RGB}{222,143,6}
\definecolor{greenseaborn}{RGB}{1,158,115}
\newcommand{\INPUT}{\item[\textbf{Input:}]}
\newcommand{\OUTPUT}{\item[\textbf{Output:}]}
\newcommand{\Figref}[1]{%
    \hyperref[#1]{Figure~\ref*{#1}}%
}
\newcommand{\Secref}[1]{%
    \hyperref[#1]{Section~\ref*{#1}}%
}
\newcommand{\Appref}[1]{%
    \hyperref[#1]{Appendix ~\ref*{#1}}%
}
\newcommand{\Thref}[1]{%
    \hyperref[#1]{Theorem ~\ref*{#1}}%
}
\newtheorem{lemma}{Lemma}
\newcommand{\Algref}[1]{%
    \hyperref[#1]{Algorithm ~\ref*{#1}}%
}
\newcommand{\Tabref}[1]{%
    \hyperref[#1]{Table ~\ref*{#1}}%
}
\def\eqref#1{equation~\ref{#1}}
\newcommand{\Eqref}[1]{%
    \hyperref[#1]{Equation~\ref*{#1}}%
}
\newtheorem*{theorem*}{Theorem}
\newcommand{\Tableref}[1]{%
    \hyperref[#1]{Table~\ref*{#1}}%
}
\def\1{\bm{1}}
\DeclareMathAlphabet{\mathsfit}{\encodingdefault}{\sfdefault}{m}{sl}
\SetMathAlphabet{\mathsfit}{bold}{\encodingdefault}{\sfdefault}{bx}{n}
\def\gD{{\mathcal{D}}}
\newcolumntype{C}[1]{>{\centering\arraybackslash}m{#1}}
\titlespacing*{\section}       {0pt}{0.5ex plus 1ex}{0pt}
\titlespacing*{\subsection}    {0pt}{0.5ex plus 1ex}{0pt}
\titlespacing*{\subsubsection} {0pt}{0.5ex plus 1ex}{0pt}
\titlespacing*{\paragraph}     {0pt}{0pt}{1ex}
\theoremstyle{plain}
\newtheorem{theorem}{Theorem}
\theoremstyle{definition}
\newtheorem{assumption}{Assumption}
\theoremstyle{remark}
\definecolor{mine}{RGB}{205, 232, 248}
\definecolor{my_g}{RGB}{128, 128, 128}
\def\eg{\emph{e.g.}, } 
\def\ie{\emph{i.e.}, }
\DeclareRobustCommand{\bbm}[1]{\operatorname{\text{\usefont{U}{DSSerif}{m}{n}#1}}}
\def\1{\bbm{1}}
\definecolor{mygray}{RGB}{150,150,150}
\definecolor{myorange}{RGB}{213,95,43}
\newcommand{\bfbwebsite}{\url{https://belief-fb.dunnolab.ai/}}
\newtcolorbox[auto counter]{takeaway}[1][]{%
  breakable,
  colback=mygray!0!white,%
  colframe=myorange!100!black,%
  title=Takeaway~\thetcbcounter,
  #1
}
\title{Zero-Shot Adaptation of Behavioral Foundation Models to Unseen Dynamics}
\author{%
\begin{tabular}{@{}c@{}}
\textbf{Maksim Bobrin}$^{1,2}$ \quad
\textbf{Ilya Zisman}$^{5}$ \\
{\normalfont\small $^{1}$AXXX \quad
$^{2}$Applied AI Institute, Computational Imaging Lab \quad
$^{5}$Humanoid} \\[0.8em]
\textbf{Alexander Nikulin}$^{1,3}$ \quad
\textbf{Vladislav Kurenkov}$^{1,4}$ \quad
\textbf{Dmitry Dylov}$^{1,2}$ \\
{\normalfont\small $^{1}$AXXX \quad
$^{2}$Applied AI Institute, Computational Imaging Lab \quad
$^{3}$MSU \quad
$^{4}$Innopolis University}
\end{tabular}%
}
\begin{document}

\maketitle

\begin{abstract}

Behavioral Foundation Models (BFMs) have proven successful at producing near-optimal policies for arbitrary tasks in a zero-shot manner, requiring no test-time retraining or task-specific fine-tuning. Among the most promising BFMs are those that estimate the successor measure, learned in an unsupervised way from task-agnostic offline data. However, these methods fail to react to changes in the dynamics, making them ineffective under partial observability or when the transition function changes. This hinders the applicability of BFMs in real-world settings (e.g., robotics), where the dynamics can change unexpectedly at test time.
In this work, we demonstrate that the Forward–Backward (FB) representation, cannot produce reasonable policies under distinct dynamics, leading to interference among latent policy representations. To address this, we propose an FB model with a transformer-based belief estimator, which greatly facilitates zero-shot adaptation. Additionally, we show that partitioning the policy-encoding space into dynamics-specific clusters aligned with context-embedding directions yields further performance gains. These traits allow our method to respond to the dynamics mismatches observed during training and to generalize to unseen dynamics changes. Empirically, in settings with changing dynamics, our approach achieves up to 2× higher zero-shot returns than baselines on both discrete and continuous tasks.

Project page: \bfbwebsite

\end{abstract}

\section{Introduction}
{\def\thefootnote{}\footnotetext{corresponding author: \texttt{maxs.bobrin@gmail.com}}
{\def\thefootnote{}\footnotetext{work is done in collaboration with \href{https://dunnolab.ai}{dunnolab.ai}}}

One  very desirable property of reinforcement learning (RL) agents is their ability to adapt during inference to new tasks without requiring any additional learning.
Achieving this in as few trials as possible would be even better: the ideal being the zero-shot adaptation \citep{barreto_suc_feat,touati2022does}, where the agent never interacts with the environment at test-time and relies solely on the task-agnostic data.
Behavioral Foundational Models (BFMs) \citep{pirotta2023fast,sikchi2024rl,tirinzoni2024metamotivo} may be considered as a step in this direction, because they can learn a variety of behaviors from offline data without task specific labels.
During inference, it is possible to extract a task-specific policy that is theoretically optimal in terms of performance. Recent works \citep{tirinzoni2024metamotivo} demonstrates that methods based on \textit{successor measure} estimation through Forward-Backward (FB) decomposition \citep{touati2021learning}, is especially versatile and can successfully extract diverse policies from data.

At the same time, FB has a fundamental drawback that limits its adaptation ability to more general settings. In our paper, we show that FB is unable to generalize across different environment configurations (\ie dynamics), such as changes in a transition function (\eg new obstacles or environment configuration) or some latent factor variation (\eg wind direction or change in mass). 
This limitation stems from the way the approximation of the \textit{successor measure} \citep{blier2021learning} is estimated: FB averages the discounted future-occupancy state distribution over all observed dynamics, which inevitably causes \textit{interference} in a policy representation space. This fact alone may severely constrain the applicability of FB in the real-world scenarios. For example, one of the largest robotics dataset, Open X-Embodiment \citep{open_x_embodiment_rt_x_2023}, consists of $22$ different robot embodiments, and training FB on each of them independently is infeasible. In \Secref{sec:theory}, we discuss the underlying cause in more details by providing both empirical and theoretical support.

To remedy such limitation, we introduce \textbf{Belief-FB (BFB)}, a conditioning method for FB through a \textit{belief} estimation, a popular technique of uncertainty quantification in Meta-RL \citep{zintgraf2020varibad, dorfman2021offlinemeta}. 
To implement this, we employ a permutation-invariant transformer encoder, denoted as $f_{\text{dyn}}$, which processes a given trajectory from the dataset to produce a dynamics-encoding vector $h$. This representation is subsequently utilized as a conditioning input to the successor feature network, expressed as $F(\cdot, \cdot, h, \cdot)$.
We pre-train $f_{\text{dyn}}$ in a self-supervised fashion, thus posing no additional requirements on the data structure or the trajectory re-labeling, while maintaining theoretical guarantees. We discuss the implementation of Belief-FB in \Secref{belief}.

Remarkably, Belief-FB enables the generalization capabilities of FB not only through the dynamics seen in the training dataset, but also on the \textit{unseen test} configurations.
Additionally, we find that in order to align \textit{belief} estimation better with FB, one also needs to partition the policy space encodings prior into dynamics-specific clusters, which significantly improves generalization abilities of FB. We propose \textbf{Rotation-FB (RFB)} that accomplishes this partitioning. We present the theoretical support and the implementation details of Rotation-FB in \Secref{rotfb}. Empirically, both BFB and RFB outperform baselines for seen and unseen dynamics, as gathered in \Figref{fig:main-res} and discussed in \Secref{exp:bfb}.

\begin{figure*}[t]
    \centering
    \includegraphics[width=\linewidth]{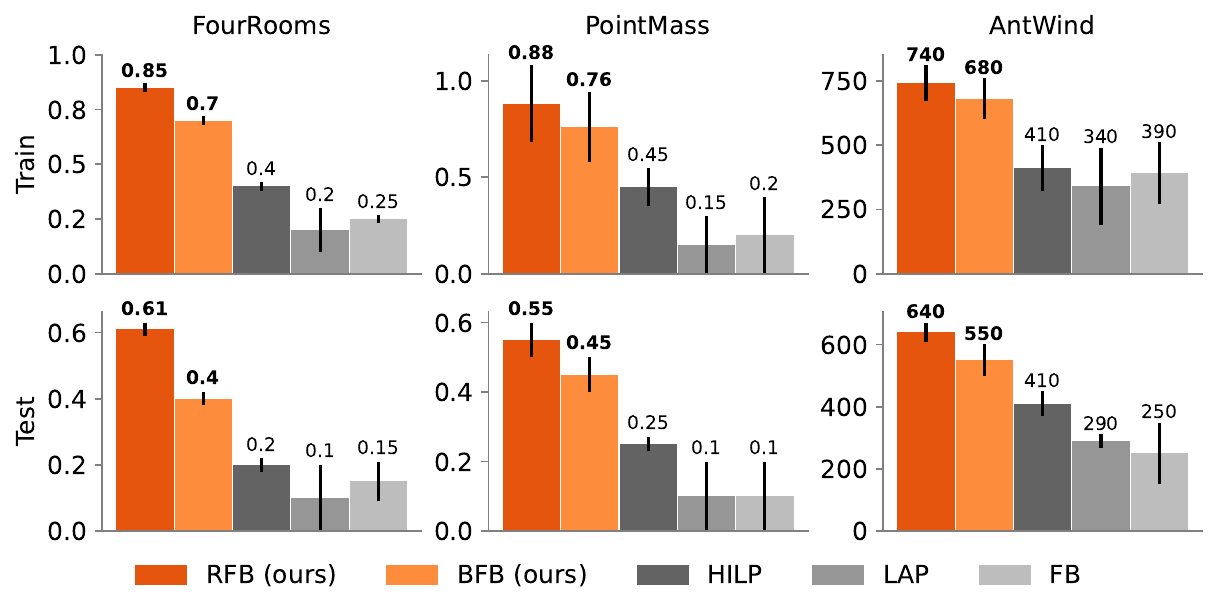}
    \caption{\textbf{Summary of results}. Aggregate mean performance over \textit{seen} (train) and \textit{unseen} (test) dynamics for zero-shot RL. The error bars indicate standard deviation over $5$ seeds. Notably, both BFB and RFB adapt not only to the dynamics seen during training, but are also able to generalize to unseen dynamics. There are $30 (20)$ training (test) dynamics for Randomized-FourRooms and PointMass and $16 (4)$ for AntWind environments.
    }
    \label{fig:main-res}
\end{figure*}

We believe that our work sufficiently broadens the possible applicability of BFMs, yet keeping all of the zero-shot properties unchanged. Our contributions are as follows:
\begin{itemize}
    \item \textbf{We demonstrate the limitation of Forward-Backward (FB) representations} \citep{touati2021learning}, which lies in its inability to generalize \textit{per se} across different dynamics both from train and test, where dynamics shift constitute of new layout grids or changes in the transition function that are hidden from an agent. Refer to \Secref{sec:theory} for more discussion.
    \item \textbf{We propose Belief–FB (BFB)}, which employs a permutation-invariant transformer encoder to infer a belief over the current dynamics 
    \citep{zintgraf2020varibad, dorfman2021offlinemeta}. Analyzing BFB’s policy encoding space reveals that additional disentanglement is beneficial, motivating our Rotation–FB (RFB) extension. \Secref{belief} examines Belief-FB, and \Secref{rotfb} details Rotation-FB’s theoretical motivation and implementation.
    \item \textbf{We empirically demonstrate that both BFB and RFB can adapt to different dynamics}, unlike its counterparts in the zero-shot setup. Refer to \Secref{exp:bfb} for the discussion and \Figref{fig:main-res} for results.
\end{itemize}

\section{Behavioral Foundation Models (BFMs)}
\label{sec:preliminaries}

A Behavioral Foundation Model (BFM) \citep{pirotta2023fast,tirinzoni2024metamotivo,pmlr-v235-frans24a,park2024foundation,sikchi2025fast} is an RL agent trained in an unsupervised manner on a task-agnostic dataset to approximate optimal policies for diverse reward functions (tasks) specified at inference (test-time).

\textit{Forward-Backward Representation (FB)} \citep{touati2021learning} approximates a discounted successor measure \citep{blier2021learning} for various behaviors across diverse tasks. The successor measure \(M^{\pi} (s_0, a_0, X)\) for subset \(X \subset \mathcal{S}\) is defined as cumulative discounted time spend at \(X\) starting at \((s_0, a_0)\) and following \(\pi\) thereafter, defined as (for finite state-action space case)
\begin{equation}
    \label{eq:discrete_sm}
    M^\pi (s_0, a_0, X) = \sum_{t\geq 0} \gamma^t \mathrm{P}(s_{t+1} \in X | s_0, a_0, \pi)
    \end{equation}
with the corresponding Q-function for a specific task $r$:
\begin{equation}
    \label{eq:qfunction}
    Q^{\pi} _r (s_0, a_0) = \sum _{s^+ \in X} r(s^+) M^\pi (s_0, a_0, s^+).
\end{equation}
In the continuous case, the FB representation aims to approximate successor measure through finite-rank approximation under diverse policies through \textit{forward} \(F: \mathcal{S} \times \mathcal{A} \times \mathcal{Z} \xrightarrow{} \mathbb{R}^d\) and \textit{backward} \(B: \mathcal{S} \xrightarrow{} \mathbb{R}^d\) functions. Given a set of policies \(\pi_z\) parametrized by task variable drawn uniformly from sphere $z_{\text{FB}}\in \text{Unif}(\mathcal{Z=\mathbb{S}^{\text{d}}})$. Assuming \(\rho\) is a probability distribution over states within the offline dataset, the objective for FB is written as     $M^{\pi_z} (s_0, a_0, X) \approx \int_{s^+ \in X} F(s_0, a_0, z)^T B(s^+) \rho(ds)$.
Then, policy can be extracted as :
\begin{equation}\label{eq:fb_policy}
    \pi_z (s) \approx \arg \max _a F(s, a, z)^T z .
\end{equation}
For continuous case, the greedy policy is approximated via DDPG \citep{lillicrap2015continuous}. We refer to the \Appref{Appendix:FB-train} for in-depth details regarding FB training procedure. During test time the task policy parametrization is approximated as \(z_{test} \approx \mathbb{E}_{(s, a) \sim \rho } [r_{test}(s, a) B(s, a)]\). If the inferred task vector \(z_{test}\) lies within the task sampling distribution (in a linear span) of $\mathcal{Z}$ used during training, then the optimal policy for task $r_{test}$ is obtained from \Eqref{eq:qfunction} as \(\pi_z (s) \approx \arg \max _a Q^{\pi_z}_{r_{test}} (s, a)\). Extended discussion on other related works is included in the \Appref{app:literature}. 

\section{Method}
\paragraph{Problem Statement.}
The dataset consisting of diverse environments can be formally considered as a Contextual Markov Decision Process (CMDP) defined by a context space $\mathcal{C}$ and a mapping $\mathcal{M}: c \in \mathcal{C} \mapsto \mathcal{M}(c) = (\mathcal{S}, \mathcal{A}, T_c, r_c, \rho_c, \gamma)$, where both $\mathcal{S}, \mathcal{A}$ are shared across contexts, $T_c: \mathcal{S} \times \mathcal{A} \to \Delta(\mathcal{S})$ is the context-dependent transition kernel, $r_c$ is the reward function, $\rho_c \in \Delta(\mathcal{S})$ is the initial state distribution, and $\gamma \in [0,1)$ is the discount factor. Each context $c$ (e.g., wind direction, friction, or door locations) specifies a unique MDP.

When the context $c$ is unobserved, the problem becomes a POMDP. Under standard assumptions, there exists a sufficient history-dependent statistic---the \emph{belief state} $b_t(c) = \mathbb{P}(c \mid H_t) \in \Delta(\mathcal{C})$---capturing the posterior over contexts given the history $H_t$. Solving the POMDP is equivalent to solving the fully observable \emph{Belief-MDP} $(\Delta(\mathcal{C}) \times \mathcal{S}, \mathcal{A}, T_b, r_b, \rho_b, \gamma)$, where states are augmented with beliefs.

We assume access to an offline, reward-free dataset $\mathcal{D}_\text{train}$ consisting of trajectories $\{(s_k, a_k, s_{k+1})\}_{k=1}^N$ collected under diverse exploratory policies from a finite set of training contexts $C_\text{train} \subset \mathcal{C}$. At test time, for an unseen context $c_\text{test} \in \mathcal{C} \setminus C_\text{train}$, we are given a short reward-free history $H = \{(s_t, a_t, s_{t+1})\}_{t=0}^L$ from an exploratory policy in $\mathcal{M}(c_\text{test})$, and a task specified by reward $r: \mathcal{S} \times \mathcal{A} \to \mathbb{R}$. 

The goal is to infer an approximate belief $\hat{b}(c \mid H)$ for any given history (trajectory) and extract a zero-shot\footnote{We use the term ``zero-shot RL'' following \cite{touati2021learning}.} policy $\pi$ (without additional learning) that minimizes the regret
\begin{equation}
    \mathcal{R} = \sup_{c_\text{test} \in \mathcal{C} \setminus C_\text{train}, \, r} \mathbb{E}_{(s,a) \sim \rho_{c_\text{test}}} \left[ Q_r^{\pi^*}(s,a) - Q_r^\pi(s,a) \right],
\end{equation}
where $\pi^*$ is the optimal policy for task $r$ under dynamics $T_{c_\text{test}}$. To formally study optimality guarantees of the problem above, we employ the following assumption commonly used for dynamics generalization \citep{eysenbach2021offdynamics, jeen2024dynamics}:
\begin{assumption}[Coverage]\label{ass:coverage}
The test initial state–action distribution $\rho_{\text{test}}$ is supported on the support of $\rho$, i.e.\ $\mathrm{supp}(\rho_{\text{test}})\subseteq \mathrm{supp}(\rho)$ (equivalently, $\rho_{\text{test}}\ll \rho$, i.e absolutely continuous).
\end{assumption}
\begin{figure*}[t!]
    \centering
    \includegraphics[width=1\linewidth]{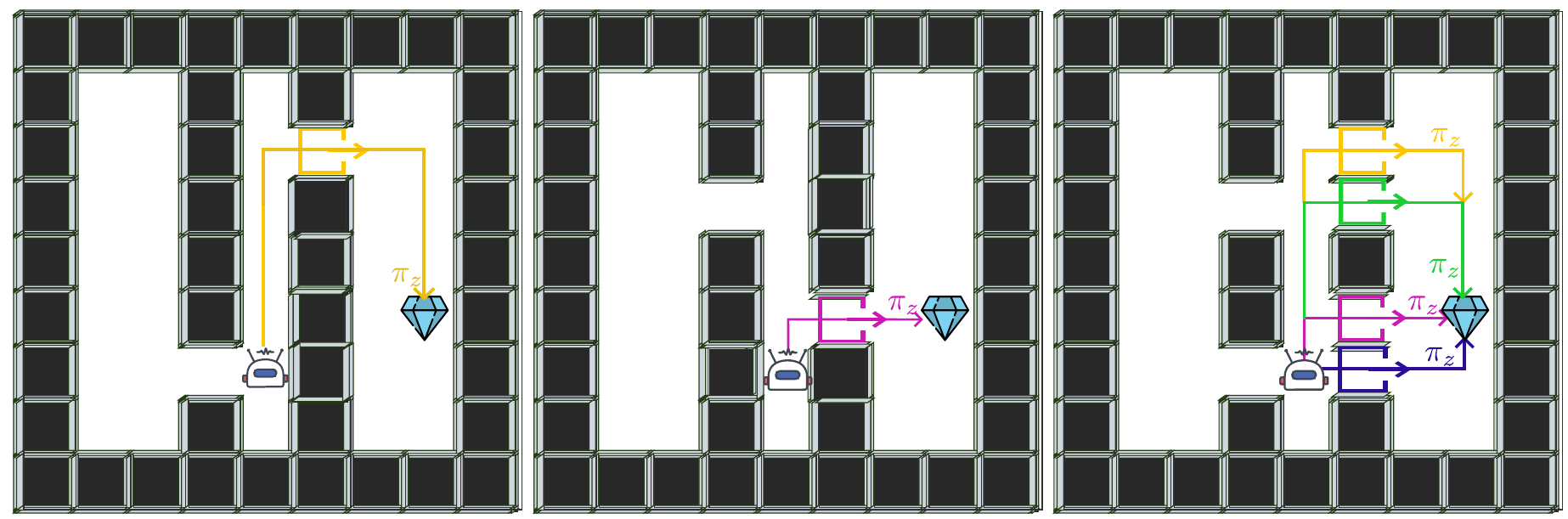}
    \vspace{-5pt}
    \caption{\textbf{Randomized-Doors environment for three different layouts, each produced through varying the grid structure (exact randomization procedure is a hidden variable)} (\emph{left-middle}) From state \(s\), the goal of an agent is to capture a diamond at target location by picking up the most suitable policy \(\pi_z\) (yellow for the first type and purple for the second) to move to the closest open door based on internal representation. (\emph{right}) When there are multiple possible future outcomes in the training data from the same state, the \(\pi_z\)'s (different colors) interfere with each other, leading to picking up an averaged policy.}
    \label{fig:maze_robot}
    \vspace{-5pt}
\end{figure*}
\subsection{Investigating latent directions space under multiple dynamics}\label{sec:theory}
We begin by addressing the following question:
\textit{Why does FB representations fail to generalize effectively to different situations under dynamics variations, \ie if learned on data sampled from diverse CMDPs?} While the answer may appear intuitive, a  closer look into the geometric structure of learned latent directions \(z_{\text{FB}}\in\mathcal{Z}\), which encode possible policies \(\pi_z\) reveals critical insights which will be helpful later. 
We approach this question both theoretically and empirically on custom didactic discrete partially-observable Randomized Doors (see \Appref{Appendix:Env-Doors}) environment for building intuition. 
Partial observability adds additional challenges and showcases the need to estimate belief state, which we discuss in the following sections. 

In this experiment the only source of dynamics variation is the grid layout type. Namely, the positions of doors and walls are changed each new episode, depending on hidden configuration variable \(c\). We collect a dataset of random trajectories drawn from multiple layouts, yielding near-uniform coverage of the entire state-action space.
Now, consider a particular state \(s\) that an agent finds itself in three different layouts (see \Figref{fig:maze_robot}). During FB training, we estimate expected successor features via \(F(s, \cdot, z_{\text{FB}})\) for policy representations $z_{\text{FB}} \sim \text{Uniform}(\mathbb{S}^{d-1})$ starting at \(s\).

In this setting, optimal successor features require different optimal policies, depending on the layout an agent is instantiated in. Because \(z_{\text{FB}}\) does not enforce a separation (\ie prior) over layout-specific futures, the FB model suffers from \textit{interference}: latent directions encoding conflicting future outcomes overlap and become entangled in the policy representation space \(\mathcal{Z}\). For each layout configuration and fixed state \(s\), \Figref{fig:spheres} depicts latent directions \(z_{\text{FB}}\), colored by optimal policy as \(a_{\text{color}}=\arg\max_a F(s, a, z_{\text{FB}})^T z_{\text{FB}}\). When FB is trained on first two layouts separately, a unique dominant behavior (colored) emerges in \(\mathcal{Z}\), recovering the optimal goal-reaching policy \(\pi^*_z\). This means that any randomly sampled $z$ at training time agrees on the possible future (successor features). In contrast, training on mixed data, where transitions come from multiple environment instances, causes $z_{\text{FB}}$ to \textbf{blend dynamics-specific information} and \textbf{average over possible futures}, yielding a policy that is suboptimal for every layout, including those in the training set. These observations are theoretically supported by the following:
\begin{theorem}[Regret bound via uniform successor approximation under \Cref{ass:coverage}]\label{eq:regret}
Let $r$ be bounded with $\|r\|_\infty\le R$ with $R= \sup_{(s,a)\in\mathcal{S}\times\mathcal{A}} |r(s,a)|$ and discount $\gamma\in(0,1)$.
For any test CMDP satisfying the coverage assumption, the policy $\pi_{\hat z}$ returned by the method obeys
\begin{equation} \label{eq:bound}
\mathbb{E}_{(s,a)\sim \rho_{\mathrm{test}}}
\!\left[\,Q_r^*(s,a)-Q_r^{\pi_{\hat z}}(s,a)\,\right]
~\le~
\frac{3}{1-\gamma}\;R\;
\big(\varepsilon_k^*+\Delta_{\mathrm{est}}\big).
\end{equation}
\end{theorem}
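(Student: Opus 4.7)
The plan is to decompose the sub-optimality of $\pi_{\hat z}$ on the test CMDP into an FB representation error (captured by $\varepsilon_k^*$) and a task-encoding estimation error (captured by $\Delta_{\mathrm{est}}$), and then convert an $L^\infty$ Q-function error into a policy-value regret via the classical approximate-greedy lemma. Coverage (Assumption~\ref{ass:coverage}) is used exactly once, to import a $\rho$-indexed approximation guarantee into a $\rho_{\mathrm{test}}$-indexed regret without picking up a Radon--Nikodym factor.

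First I would fix notation: let $\tilde Q_z(s,a) := F(s,a,z)^\top z_r$ be the FB-approximate action-value for task $r$ at latent direction $z$, where $z_r := \mathbb{E}_{s\sim\rho}[r(s)\, B(s)]$; by construction $\pi_{\hat z}(s) = \arg\max_a \tilde Q_{\hat z}(s,a)$. I would then invoke the approximate-greedy lemma (Singh--Yee / Bertsekas--Tsitsiklis): if $\pi$ is greedy w.r.t.\ some bounded $\hat Q$, then $\|Q_r^* - Q_r^{\pi}\|_\infty \le \tfrac{2}{1-\gamma}\,\|\hat Q - Q_r^*\|_\infty$. Applied with $\hat Q = \tilde Q_{\hat z}$, this reduces the theorem to bounding $\|\tilde Q_{\hat z} - Q_r^*\|_\infty$ and then, via Assumption~\ref{ass:coverage}, passing from $\|\cdot\|_\infty$ to $\mathbb{E}_{\rho_{\mathrm{test}}}[\cdot]$ (which is lossless, since the sup upper-bounds any expectation over a dominated measure).

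Next I would apply a three-term triangle inequality. Let $z^*$ be the latent code realising the \emph{best in-class} FB policy for $r$. Then
\begin{equation*}
\|\tilde Q_{\hat z} - Q_r^*\|_\infty
\;\le\;
\underbrace{\|\tilde Q_{\hat z} - \tilde Q_{z^*}\|_\infty}_{\text{task-encoding error}}
\;+\;
\underbrace{\|\tilde Q_{z^*} - Q_r^{\pi_{z^*}}\|_\infty}_{\text{finite-rank FB error}}
\;+\;
\underbrace{\|Q_r^{\pi_{z^*}} - Q_r^*\|_\infty}_{\text{best in-class sub-optimality}}.
\end{equation*}
The first term is $R\,\Delta_{\mathrm{est}}/(1-\gamma)$, since $\tilde Q_z$ depends linearly on $z_r$ and the reward integral is Lipschitz in its argument with scale $\|r\|_\infty \le R$. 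The second term is exactly the rank-$k$ approximation error of the successor measure, times $R/(1-\gamma)$, which I would fold into $\varepsilon_k^*$. The third term is bounded by a second application of the greedy lemma (with $\hat Q=\tilde Q_{z^*}$), which recycles the same $\varepsilon_k^*$; this is where the contribution of $\varepsilon_k^*$ enters twice and, upon careful accounting together with the outer factor $2/(1-\gamma)$, produces the stated constant $3/(1-\gamma)$ rather than the naive $4/(1-\gamma)$.

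The main obstacle is the bookkeeping: formalising $\varepsilon_k^*$ so that it cleanly absorbs both the $L^\infty$ FB-successor approximation error \emph{and} the resulting in-class policy sub-optimality, without double-counting, and then tightening the constants across the two applications of the greedy lemma. A secondary subtlety is that the greedy lemma is naturally stated in $\|\cdot\|_\infty$, whereas the FB training objective provides control in $L^2(\rho)$; this is precisely why Assumption~\ref{ass:coverage} is invoked --- absolute continuity $\rho_{\mathrm{test}}\ll\rho$ ensures that the approximation guarantees transfer to the test CMDP without a Radon--Nikodym coefficient degrading the bound.
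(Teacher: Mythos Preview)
Your high-level strategy (greedy lemma plus a triangle inequality, then pass from $\|\cdot\|_\infty$ to $\mathbb{E}_{\rho_{\mathrm{test}}}$) is reasonable, but you have misidentified what $\Delta_{\mathrm{est}}$ is, and this derails the decomposition. In the paper, $\Delta_{\mathrm{est}}$ is \emph{not} a task-encoding error $\|\hat z - z^*\|$; it is the $L^2(\rho)$ gap between the \emph{trained} factors $\widehat F^\top \widehat B$ and the \emph{class-optimal} factors $F^{*\top}B^{*}$ (the minimizers defining $\varepsilon_k^*$). Consequently the natural split is a \emph{two}-term triangle inequality at the successor-measure level,
\[
\bigl\|\widehat F^\top \widehat B - M^{\pi_i^*}/\rho\bigr\|_{L^2(\rho)}
\;\le\;
\underbrace{\bigl\|\widehat F^\top \widehat B - F^{*\top}B^{*}\bigr\|}_{\Delta_{\mathrm{est}}}
\;+\;
\underbrace{\bigl\|F^{*\top}B^{*} - M^{\pi_i^*}/\rho\bigr\|}_{\le\,\varepsilon_k^*},
\]
not a three-term split at the $Q$-level involving a separate ``best-in-class sub-optimality'' term. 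Your first term $\|\tilde Q_{\hat z}-\tilde Q_{z^*}\|_\infty$ has no obvious relation to the paper's $\Delta_{\mathrm{est}}$, and your justification (``linear in $z_r$'') addresses the wrong argument of $F$.

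The paper then feeds the single quantity $\varepsilon=\varepsilon_k^*+\Delta_{\mathrm{est}}$ into one black-box lemma (Lemma~\ref{lem:succ-to-value}) which converts a uniform $L^2(\rho)$ successor error directly into $\|Q_r^*-Q_r^{\pi_{z}}\|_\infty\le \tfrac{3}{1-\gamma}R\varepsilon$, and finally bounds the expectation by the sup. Your route---two nested applications of the Singh--Yee greedy lemma plus a three-way split---would naturally produce a constant like $\tfrac{2}{1-\gamma}\cdot(1+\tfrac{2}{1-\gamma})$ or $\tfrac{4}{1-\gamma}$; the ``careful accounting'' you allude to for recovering exactly $\tfrac{3}{1-\gamma}$ is not shown and is the crux you would need to supply. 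Also note that $\mathbb{E}_\mu[f]\le\|f\|_\infty$ holds for \emph{any} probability measure, so the coverage assumption is not doing the work you attribute to it at that step; its role is to make the $\rho$-indexed successor bound meaningful on $\mathrm{supp}(\rho_{\mathrm{test}})$, not to avoid a Radon--Nikodym factor in the final expectation.
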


We provide a full proof in \Appref{Appendix:Proofs}. Because $\epsilon _{k+1} \geq \epsilon_k$
  by monotonicity of the worst-case approximation error over a fixed function class, the upper bound in \eqref{eq:bound} becomes looser as more environments are included at training time. This statement concerns the approximation term only. In practice, adding CMDPs may also increase the dataset size and reduce the finite-sample estimation term (see additional discussion in the \Appref{Appendix:Proofs}), so the net effect on regret is empirical.

Futher, in \Secref{rotfb}, we refine this result and show that the explicit dependence on the total number of environments $k$ can be replaced by a dependence on $k_{\max}$ (the size of the largest cone/cluster), thereby tightening the upper bound when $k_{\max}\ll k$.

This interference highlights a fundamental trade-off. FB is expressive enough to model any task in the linear span of the reward, and yet, when trained across environments with distinct unobserved parameters, the lack of contextual conditioning forces it to average expected future across different dynamics rather than separate them. The resulting successor measure merges transitions from distinct layouts and entangles directions in the latent space \(\mathcal{Z}\). To disentangle these directions, we must represent uncertainty about the hidden context explicitly. The next section introduces a belief-conditioned objective that infers the latent context and allows FB to maintain environment-specific successor measures.

\begin{figure*}[t!]
    \centering \includegraphics[width=1\linewidth]{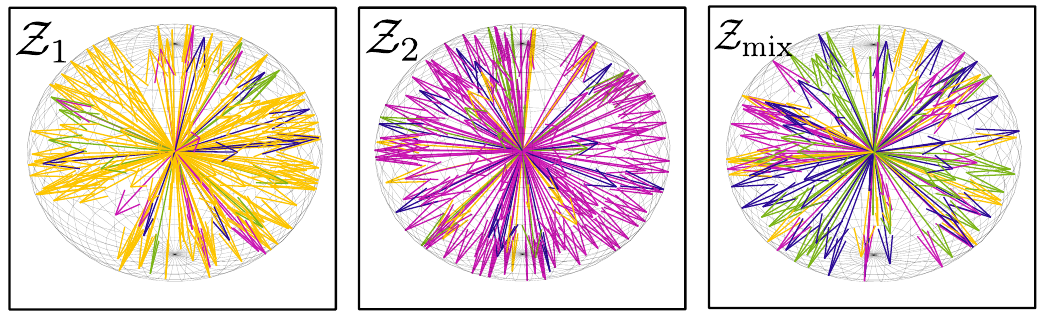}
    \vspace{-5pt}
        \caption{\textbf{Visualization of the randomly sampled vectors $z_{\text{FB}}$, which encode policies $\pi_{z_\text{FB}}$ for three datasets from \Figref{fig:maze_robot}. For a fixed state \(s\) and same goal state across configurations, arrows depict latent directions \(z_{\text{FB}} \in \mathcal{Z}\) and colored by dominant directions as \(a_{color}=\arg \max _a F(s, a, z_{\text{FB}})^T z_{\text{FB}}\).} (\emph{left-middle}) When FB is trained on the two configurations independently, most of the latent directions agree on the optimal policy \(\pi_z\). (\emph{right}) When FB is trained on mix of CMDPs and at test time tasked with any particular configuration from train, obtained policy is ambiguous, since most policy-encoding directions do not agree on the action.}
    \label{fig:spheres}
    \vspace{-5pt}
\end{figure*}

\begin{takeaway}
Because FB training inherently averages over all possible successor features, it cannot learn a disentangled policy space and, therefore, fails to adapt to changes in dynamics.
\end{takeaway}

\subsection{Belief State Modeling}\label{belief}
To resolve the interference issue described in \Secref{sec:theory}, we \textbf{infer the latent context of an environment and augment FB input on that belief}. We train a transformer encoder \(f_{\text{dyn}}\), by passing to a \textit{set} of transitions \(\{(s_t, a_t, s_{t+1}')\}^N _{t=1}\) and outputting an \(h \in \mathbb{R}^d\). We denote the space of all possible inferred contexts as \(\mathcal{H}\), where each element \(h\) encodes dynamics for particular environment. Because the ordering is discarded and no rewards in transitions are provided, the encoder must focus on dynamics specific mismatches (\eg layout geometry, friction or wind direction), rather than policy specifics. Such context encoder should be permutation invariant, since unobservable factors describing environment are independent of the order of transitions in an episode. This setting provides theoretical ground for zero-shot and few-shot learning \cite{snell2017prototypical}. 

Concretely, dataset consists of episodes \((\{(s_t, a_t, s'_{t+1}) _{c_i}\}^N _{t=1}\) coming from CMDPs with randomly instantiated hidden specification variable \(c_i\) (different dynamics).  We train a transformer encoder on random episodes (without episodic labels $c_i$) of context length \(n\) to infer contextual (hidden) variable \(h\) which fully specifies the dynamics across given episode. 
The transformer encoder loss involves two main components: 1) \(h\) is encouraged to follow a Gaussian prior and is shared across trajectory, and 2) projection head, which combines \(h\) with \((s_(t, a_t)\) to predict \(s_{t+1}\). Those stages can be either trained end-to-end or separately. We observed that separating FB training from \(f_{\text{dyn}}\) gives better results.

For each trajectory we concatenate the inferred context vector \(h\) with the task vector \(z_{\text{FB}}\) to obtain augmented input \([h;z_{\text{FB}}]\) and condition only forward network as:
\begin{equation}
    \hat{M}_{\pi_z}(s_t, a_t, s_{t+1}) = F(s_t, a_t, [h;z_{\text{FB}}])^T B(s_{t+1}).
\end{equation}
We empirically found that conditioning the backward network \(B\) degraded performance, producing smoothed out \(Q\) function, so \(B\) remains shared across contexts. Algorithm is summarized in \Algref{alg:algo}.

At test time, the agent is provided with a short (context length), reward-free trajectory and it is passed to \(f_{\text{dyn}}\) to obtain \(h\). By plugging the result into \Eqref{eq:fb_policy}, the greedy policy is obtained.

\begin{takeaway}
We train a transformer in a self-supervised regime to estimate a belief over possible contexts, augmenting FB inputs and enabling effective disentanglement of contextual representations.
\end{takeaway}

\begin{figure*}[t!]
    \centering
    \includegraphics[width=1\linewidth]{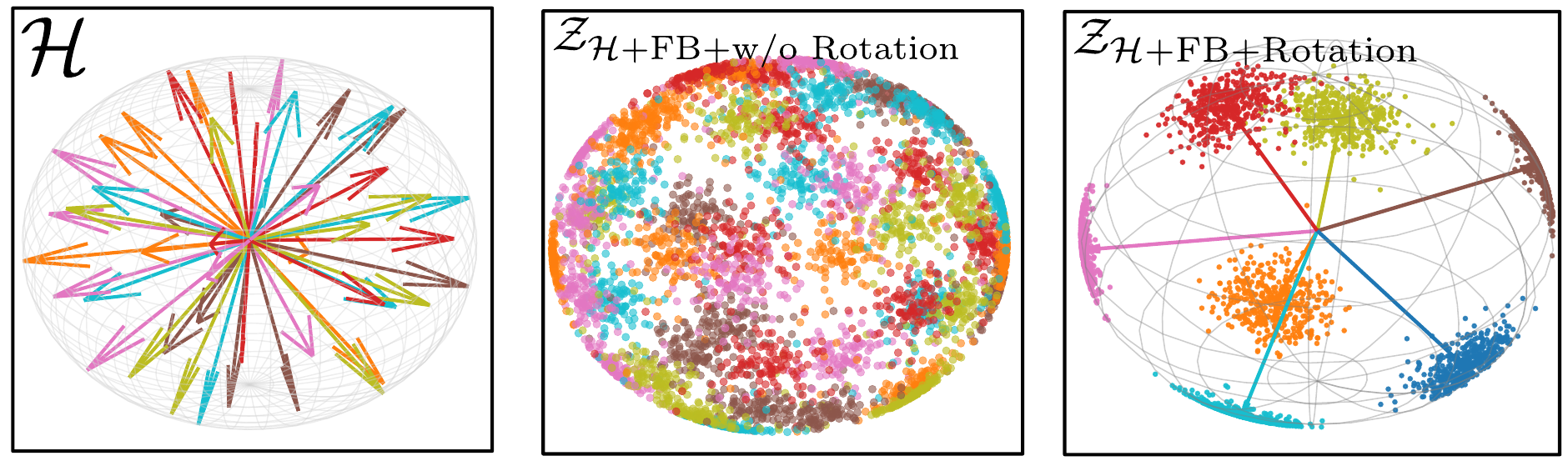}
    \vspace{-5pt}
    \caption{\textbf{Inferred context vectors and task vectors across training stages.}
    Arrows represent inferred contexts \(h \in \mathcal{H}\); points on the sphere boundary represent task vectors \(z_{\text{FB}}\) (middle panel). Points are colored by CMDP, so transitions from the same CMDP share the same color. The concentration parameter \(\kappa\) controls the cluster dispersion.
    (\emph{left}) Untrained \(f_{\text{dyn}}\): inferred contexts are unstructured; transitions from the same CMDP are not aligned.
    (\emph{middle}) New sampling procedure (before training): \(z_{\text{FB}}\) is encouraged to align with the corresponding \(h\), but clusters across CMDPs still overlap.
    (\emph{right}) After training: contexts from the same CMDP form coherent, aligned clusters, and task vectors \(z_{\text{FB}}\) separate across environment configurations, reducing interference.}
    \label{fig:rot_fb}
    \vspace{-5pt}
\end{figure*}

\subsection{Structuring directions in the latent space}\label{rotfb}
Insights from \Secref{sec:theory} showed that sampling task-vectors \(z_{\text{FB}}\) uniformly on the hypersphere encodes averaged policies, while \Secref{rotfb} provided a solution through explicit context identification. We now combine these observations together through enhanced sampling \(z_{\text{FB}}\) around the inferred context \(h\). 

In Vanilla-FB, each state \(s\) draws \(z_\text{FB}\sim \text{Unif}(\mathbb{S}^{d-1})\) with no inductive bias, so resulting policies \(\pi_z\) conflict with each other in CMDP setting, \textbf{even if additional explicit conditioning is introduced as before}. We replace uniform prior with a \textit{von Mises-Fisher} (vMF) distribution centered at the context direction for episode \(h=f_{\text{dyn}}(\{(s_i, a_i, s_{i+1})\})\) as 
\begin{equation}
    z_{h+\text{FB}} \sim \text{vMF}(\mu=h, \kappa).
\end{equation}
with \(\kappa\) controlling the spread or \textit{diversity} of policies (left and middle figures from \Figref{fig:rot_fb}). In practice, to draw \(z_{h+\text{FB}}\) we first pick a simple vector (\eg the first basis vector), perturb with \(\text{vMF}\) noise, and finally rotate the result onto \(h\) with Householder reflection.

This enhancement has several benefits: 1) because directions \(h\) that differ in dynamics now occupy disjoint cones on the hypersphere, FB can fit the successor measure locally inside each cone, avoiding the destructive averaging effect quantified in \Secref{sec:theory} and 2) alignment procedure encourages the agent to explore policies that are plausible under its current belief while still injecting controlled diversity through \(\kappa\). 

Importantly, such a procedure also lowers the \Thref{eq:regret} upper bound by replacing its dependence on the total number of environments $k$ with a dependence on $k_{\max}$ (the size of the largest cone).

\begin{theorem}[Regret bound under latent-space partitioning]\label{th:rotary}
Let $h_1,\dots,h_L\in \mathbb{S}^{d-1}$ be the context directions from $f_{\text{dyn}}$ and let
$\{\mathcal{C}_j\}_{j=1}^L$ be disjoint cones around them. Assume \emph{block-separable
parameterization} (Assumption~\ref{ass:gating} in Appendix~B), so that losses from $z\in\mathcal{C}_j$
depend only on block $(F_j,B_j)$. If $k_{\max}=\max_j |\{i: z_i\in \mathcal{C}_j\}|$, then
\[
\varepsilon_k^* \;=\; \max_{1\le j\le L}\;\varepsilon_{|\mathcal{C}_j|}^*
\;\le\; \varepsilon_{k_{\max}}^*,
\]
and Theorem~\ref{thm:regret-appendix} holds with $\varepsilon_k^*$ replaced by $\max_j \varepsilon_{|\mathcal{C}_j|}^*$.
(See Appendix~B, Theorem~\ref{thm:block-sep}.)
\end{theorem}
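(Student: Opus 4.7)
The plan is to exploit the block-separable parameterization (Assumption~\ref{ass:gating}) to decouple the global FB fitting problem, originally over all $k$ environments, into $L$ independent single-cone subproblems, and then invoke Theorem~\ref{eq:regret} cone-by-cone. First I would make precise what ``block-separable'' buys us. Under the assumption, for every $z\in\mathcal{C}_j$ the forward-backward predictor $F(\cdot,\cdot,[h;z])^\top B(\cdot)$ depends only on the parameter block $(F_j,B_j)$, so the population FB loss decomposes as a sum $\mathcal{L}(F,B)=\sum_{j=1}^{L}\mathcal{L}_j(F_j,B_j)$, where $\mathcal{L}_j$ involves only the data and $z$-samples whose belief direction falls in $\mathcal{C}_j$. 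Minimizing $\mathcal{L}$ is therefore equivalent to minimizing each $\mathcal{L}_j$ separately over its own block.

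Next I would identify each subproblem with a smaller instance of the FB approximation problem analyzed in Section~\ref{sec:theory}. Cone $\mathcal{C}_j$ sees transitions from $|\mathcal{C}_j|$ environments (the ones routed to it by $f_{\text{dyn}}$) and its own restricted task-vector distribution $\mathrm{vMF}(h_j,\kappa)$ supported inside $\mathcal{C}_j$. Since the function class used inside block $j$ has the same expressive power per-environment as the monolithic class, the worst-case uniform successor-measure approximation error achievable on cone $j$ is by definition $\varepsilon_{|\mathcal{C}_j|}^{*}$. Taking the worst cone gives $\varepsilon_k^{*}=\max_{j}\varepsilon_{|\mathcal{C}_j|}^{*}$, and monotonicity of $\varepsilon_n^{*}$ in $n$ (the same monotonicity already used after Theorem~\ref{eq:regret}) yields the stated bound $\max_j\varepsilon_{|\mathcal{C}_j|}^{*}\le \varepsilon_{k_{\max}}^{*}$.

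Finally, I would plug this refined approximation term into the regret bound of Theorem~\ref{eq:regret}. Because at test time $f_{\text{dyn}}$ produces a belief $\hat h$ that lies (with high probability under the coverage assumption) in a single cone $\mathcal{C}_{j(\hat h)}$, the policy $\pi_{\hat z}$ is extracted entirely from the corresponding block and inherits the per-cone approximation error rather than the global one. The Q-function decomposition, coverage bound on the change-of-measure $\rho_{\mathrm{test}}\ll\rho$, and triangle inequality steps used in the proof of Theorem~\ref{eq:regret} go through verbatim with $\varepsilon_k^{*}$ replaced by $\max_j\varepsilon_{|\mathcal{C}_j|}^{*}\le \varepsilon_{k_{\max}}^{*}$, delivering the tightened regret.

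The main obstacle I anticipate is making the block-separability reduction watertight: one must argue not only that the population loss decomposes, but also that the optimum of the decoupled problem is realized by a predictor inside the original hypothesis class (i.e.\ that the gating induced by $\hat h\in\mathcal{C}_j$ at evaluation time matches the partition used during training), and that the estimation term $\Delta_{\mathrm{est}}$ does not blow up when data is effectively sliced by cones. A clean way is to fold both issues into Assumption~\ref{ass:gating} and state $\Delta_{\mathrm{est}}$ as the maximum finite-sample error across cones, which is what Theorem~\ref{thm:block-sep} in Appendix~B formalizes.
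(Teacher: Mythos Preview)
Your proposal is correct and follows essentially the same route as the paper: use Assumption~\ref{ass:gating} to decompose the FB objective as $\sum_j \mathcal{L}_j(F_j,B_j)$, solve each block independently so that the global uniform error becomes $\max_j \varepsilon_{|\mathcal{C}_j|}^{*}$, and then invoke monotonicity to bound this by $\varepsilon_{k_{\max}}^{*}$ before substituting into Theorem~\ref{eq:regret}. Your additional remarks on test-time routing consistency and the per-cone treatment of $\Delta_{\mathrm{est}}$ go slightly beyond the paper's terse sketch but are consonant with it.
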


Intuitively, \Thref{th:rotary} states that after the partitioning procedure of the latent space into non-overlapping clusters based on context representations \(h\), the global worst-case FB approximation error \(\epsilon_k = \max_{j\leq L} \epsilon_j\) is determined only by the cluster whose error \(\epsilon_j\) is largest. 
Importantly, the bound depends on $k_{\max}$ rather than the total $k$. When $k_{\max}$ is controlled (e.g., via non-overlapping cones induced by an appropriate concentration $\kappa$), the bound becomes effectively independent of $k$. Full proof can be found in \Appref{Appendix:Proofs}

\begin{takeaway}
Adjusting the prior over task vectors $z_\text{FB}$ further mitigates the averaging effect and disentangles policy representations better based on the inferred dynamics.
\end{takeaway}
\section{Experiments}\label{sec:experiments}
In this section, we compare proposed methods, namely: \textbf{Belief-FB (BFB)} (\Secref{belief}) and its extension \textbf{Rotation-FB (RFB)} (\Secref{rotfb}), against the baselines in discrete and continuous settings. We outline experiments design below; all other necessary details are provided in \Appref{Appendix:experiments}. Every environment is framed as a contextual MDP (CMDP), where the context differs by the underlying hidden variation (\eg grid layout, transition dynamics). During test time, we provide a single trajectory from random exploration policy, which enables context inference.

\begin{figure*}[t!]
    \centering
    \includegraphics[width=1.0\linewidth]{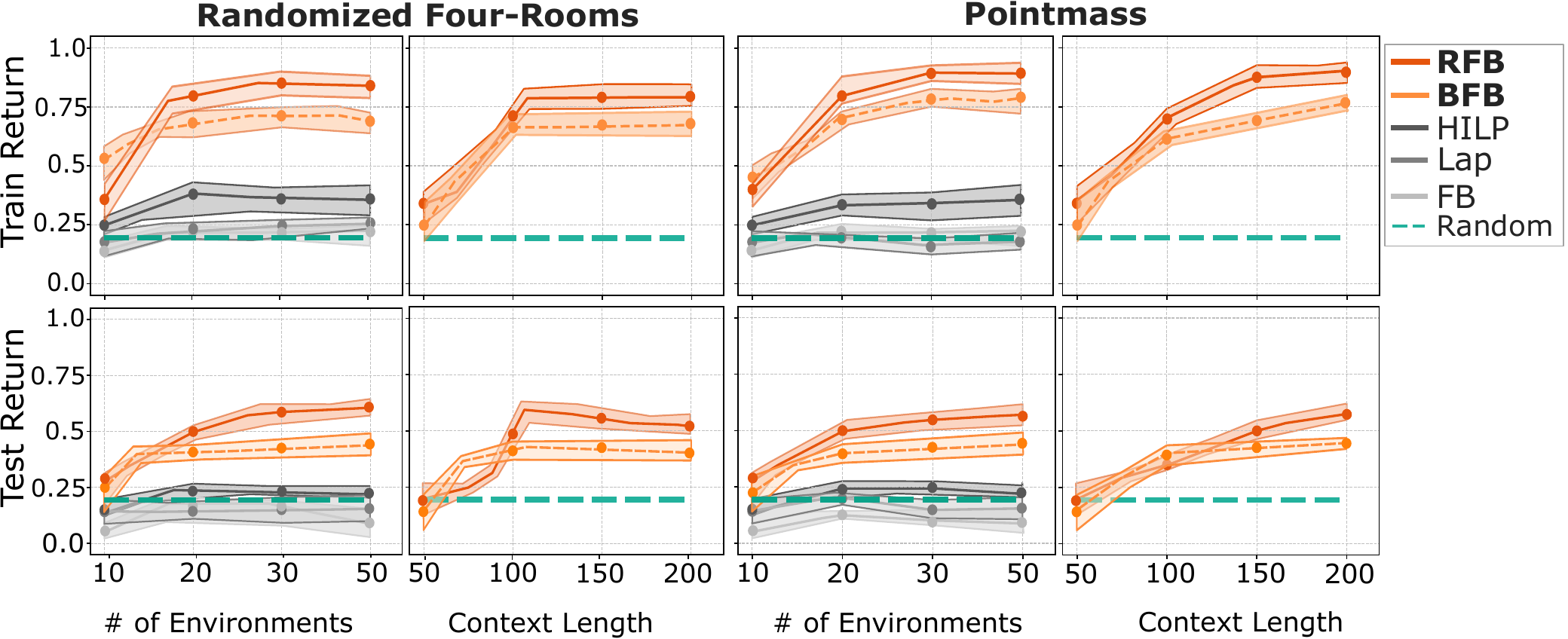} 
    \vspace{-5pt}
    \caption{
    \textbf{Ablations on data diversity and context length of transformer encoder.} We show the influence of number of environments (data diversity) and context length on train and test performance in Four-Rooms and Pointmass environments. For data-diversity ablation, we see a clear performance boost up until some point, after which it platoes, as the \Thref{eq:regret} predicts. In our context‐length ablation, we observe similar behaviour: performance improves as the context grows up to the length of a single episode, and then levels off. The results are averaged across three seeds, the opaque fill indicates standard deviation. 
    }
    \label{fig:4rooms_abl}
    \vspace{-5pt}
\end{figure*}

\subsection{Environments and Setup}
\label{exp:setup}
To support claims and theoretical insights made in previous sections, we consider the following experimental setups: \textbf{(i)} discrete, partially observable Randomized Four-Rooms (\Appref{Appendix:Fourrooms}), \textbf{(ii)} continuous AntWind (\Appref{Appendix:Ant-Wind}), and lastly \textbf{(iii)} continuous partially observable Randomized-Pointmass (\Appref{Appendix:Pointmass}). We vary the number of train layouts for each experiment, while fixing the number of held-out \textit{unseen} context settings to \(20\) for Randomized Four-Rooms and Randomized-Pointmass, and \(4\) for Ant-Wind. We perform comparisons against following baselines: 

\textbf{HILP} \citep{park2024foundation} is a method that learns state representations from offline data so that the distance in the learned representation space is proportional to the number of steps between two states in original space. \textbf{FB} \citep{touati2021learning} is an original version of the FB, described in \Secref{sec:preliminaries}. \textbf{Laplacian RL (LAP)} \citep{wu2018the} constructs a graph Laplacian over state transitions from experience replay, then computes its eigenvectors to form low-dimensional representations that capture the environment's intrinsic structure. \textbf{Random} agent, which randomly explores the environment in a task-independent manner.

\paragraph{Randomized Four-Rooms}
is a discrete, deterministic, partially observable environment, where the task is to optimally move to the goal location. Training data is collected by executing random policies in \(N\) distinct grid layouts, that differ in doorway and wall locations. 

\paragraph{Ant-Wind}
is a continuous environment, where the goal is to make an ant to walk forward as fast as possible. The environment dynamics are determined by the direction (angle) of a wind \(d\).

\begin{wrapfigure}{r}{0.35\textwidth}
    \vskip -0.2in   
    \includegraphics[width=0.35\textwidth]{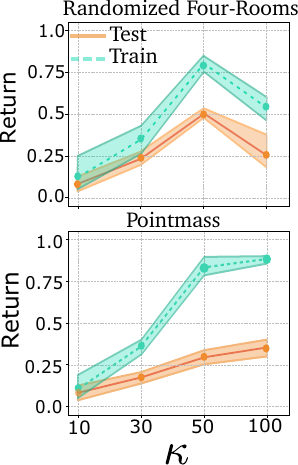}
    \caption{\textbf{Influence of \(\kappa\) in RFB on performance.} The results are averaged across three seed, the opaque fill represents standard deviation.}
    \label{fig:kappa}
    \vskip -0.6in
\end{wrapfigure}

\paragraph{Randomized-Pointmass} is a partially observable continuous environment, where the task is to move to the goal locations. Maze grid structure is generated randomly, where each cell either contains wall or empty, while ensuring there is a path between start and goal locations.


\subsection{Could belief estimation enable adaptation in FB?}
Previously, we provided the theoretical foundations and speculated on the matter why FB is unable to differentiate between distinct dynamics and how we can use the belief estimation to overcome this. We refer to \Tabref{tab:experiments} and \Figref{fig:main-res} that show our empirical findings to support our claims.

We would like to point out that neither FB nor LAP are able to outperform a simple random baseline in PointMass and FourRoom, indicating that the policy they learn is most likely stuck in some obstacle due to averaging (see \Secref{sec:theory}. Only HILP, which uses a different way to learn policy representations, is able to perform better than random policy. 

Belief-FB and Rotation-FB outperform every baseline method, indicating that belief estimation is indeed a missing piece for adaptation. Notably, our methods also demonstrate generalization capabilities beyond train data on unseen test tasks.

\subsection{Do BFB and RFB capture hidden properties of the environment?}
\label{exp:bfb}
For an agent to refine its policy, it needs to keep track and update the uncertainty over possible environment configurations. Both Belief-FB and Rotation-FB accomplish this. \Figref{fig:belief} illustrates this phenomenon visually. In Randomized-Door (left), the episodic trajectories from five layouts form non‑overlapping clusters in the first two principal components of \(h\), effectively disentangling different dynamics. 

In Ant-Wind, the embeddings lie almost perfectly on a circle whose azimuth matches the underlying wind direction, generalizing smoothly to the \(4\) held‑out wind angles. The quantitative results for evaluation in \Tabref{tab:experiments} (averaged across all environments)  reveal that the baseline methods fail to recover those environment-specific properties and therefore produce sub-optimal policies even for train cases. In particular, HILP tends to predict an average direction in Randomized Four-rooms and ignores obstacles, while FB outputs same policy and \(Q\) function for almost all environments. \Figref{fig:qfunc} shows that \(Q\) function is properly estimated only for BFB and RFB, respecting wall positions.

\subsection{Does change in context length input to the \(f_{\text{dyn}}\) impacts performance?}
\label{exp:abl_start} 
In this experiment, we examine whether increasing the input trajectory length of improves performance. We vary the context length of \(f_{\text{dyn}}\) from \(50\) to \(200\) and present the results in \Figref{fig:4rooms_abl} for both Randomized Four-Rooms and Randomized Pointmass environments, across train and test configurations. The results show that performance is poor when the context length is shorter than a single trajectory episode (\(100\) steps), as short trajectories only capture local, near-term goals. Conversely, excessively long sequences provide no additional benefit due to redundancy, since \(f_{\text{dyn}}\) already contains all neccessary information. Evaluations on both train and test environments demonstrate that \(f_{\text{dyn}}\) produces representations \(h\) capable of distinguishing between different context instances while maintaining robustness.

\subsection{Does increase in dataset diversity make policies more robust?}
We investigate if diversifying CMDP training configurations improves performance. Intuitively, broader state-action space coverage enhances successor measure estimation. Experiments confirm this: \Figref{fig:4rooms_abl} shows rapid improvement for BFB up to \(25\) configurations, while baselines match random policy performance. Once learned representations $h$ from $f_{\text{dyn}}$ cover all variation modes (contexts), additional data yields minimal gain (\(<3\%\)). These results align with \Thref{eq:regret}.

\begin{figure*}[t!]
    \centering
    \includegraphics[width=1\linewidth]{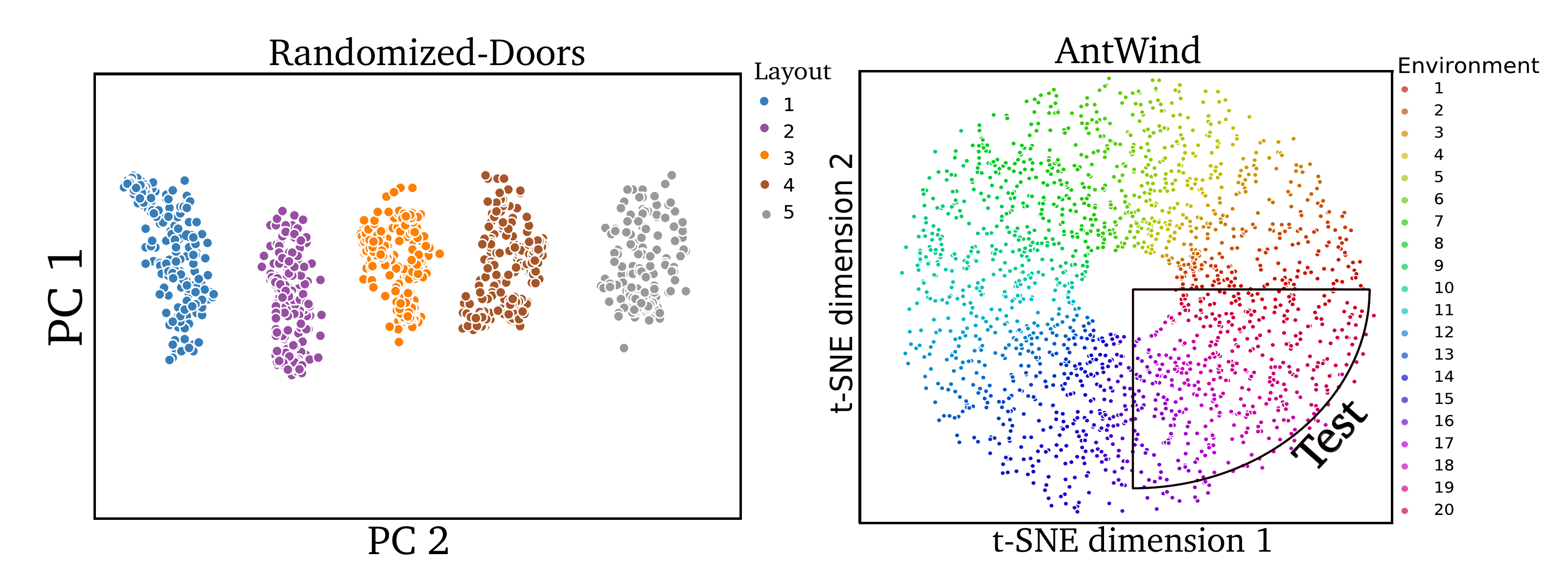}
    \vspace{-10pt}
    \caption{
    \textbf{2D projections of \(z_\text{dyn}\) inferred from different trajectories across number of different contexts (colors), showing effective disentangling environments based on transition function or other mismatches.}
    (\emph{left}) First two principal components are visualized for estimated \(z_\text{dyn}\) from five trajectories, each representing different layout type in Randomized-Doors. (\emph{right}) Inferred context variables \(z_\text{dyn}\) recover hidden wind direction parameter in AntWind environment both for train and test, proving successful extrapolation properties. 
    }
    \label{fig:belief}
    \vspace{-8pt}
\end{figure*}
\begin{table*}[h]
\centering
\caption{{Zero-shot performance  across environments with varying dynamics. Results for FourRooms, PointMass, and AntWind are aligned with the main paper. We add Oracle-ID (one-hot environment ID concatenation) and Contextual-FB (our reimplementation of \cite{jeen2024dynamics}). Oracle-ID excels in-distribution but fails to generalize out-of-distribution (OOD). Contextual-FB underperforms due to reliance on classifier expressivity. For the new OGBench Scene environment, we vary friction from 0.4-1.0 (train) and test on unseen low friction 0.1-0.3, demonstrating dynamics generalization akin to AntWind (wind direction variation). Higher is better.}}
\label{tab:additional_results}
\resizebox{\textwidth}{!}{
\begin{tabular}{l|cc|cc|cc|cc}
\toprule
\multirow{2}{*}{\textbf{Method}} & \multicolumn{2}{c|}{\textbf{FourRooms}} & \multicolumn{2}{c|}{\textbf{PointMass}} & \multicolumn{2}{c|}{\textbf{AntWind}} & \multicolumn{2}{c}{\textbf{OGBench Scene}} \\
\cmidrule(lr){2-3} \cmidrule(lr){4-5} \cmidrule(lr){6-7} \cmidrule(lr){8-9}
& Train & Test & Train & Test & Train & Test & Train & Test \\
\midrule
FB & 0.25 $\pm$ 0.05 & 0.15 $\pm$ 0.04 & 0.20 $\pm$ 0.05 & 0.10 $\pm$ 0.03 & 390 $\pm$ 40 & 250 $\pm$ 30 & 0.40 $\pm$ 0.06 & 0.20 $\pm$ 0.05 \\
LAP & 0.20 $\pm$ 0.04 & 0.10 $\pm$ 0.03 & 0.15 $\pm$ 0.04 & 0.10 $\pm$ 0.03 & 340 $\pm$ 35 & 290 $\pm$ 25 & 0.30 $\pm$ 0.05 & 0.10 $\pm$ 0.03 \\
HILP & 0.40 $\pm$ 0.06 & 0.20 $\pm$ 0.05 & 0.45 $\pm$ 0.06 & 0.25 $\pm$ 0.05 & 410 $\pm$ 45 & 410 $\pm$ 40 & 0.50 $\pm$ 0.07 & 0.30 $\pm$ 0.06 \\
Contextual-FB & 0.35 $\pm$ 0.05 & 0.18 $\pm$ 0.04 & 0.30 $\pm$ 0.05 & 0.15 $\pm$ 0.04 & 450 $\pm$ 50 & 350 $\pm$ 40 & 0.60 $\pm$ 0.08 & 0.40 $\pm$ 0.07 \\
Oracle-ID & 0.90 $\pm$ 0.03 & 0.10 $\pm$ 0.03 & 0.92 $\pm$ 0.02 & 0.08 $\pm$ 0.02 & 780 $\pm$ 30 & 50 $\pm$ 20 & 0.95 $\pm$ 0.02 & 0.0 $\pm$ 0.02 \\
\midrule
BFB (ours) & \textbf{0.70 $\pm$ 0.07} & 0.40 $\pm$ 0.06 & 0.76 $\pm$ 0.07 & 0.45 $\pm$ 0.06 & 680 $\pm$ 60 & 550 $\pm$ 50 & 0.6 $\pm$ 0.07 & 0.45 $\pm$ 0.06 \\
RFB (ours) & \textbf{0.85 $\pm$ 0.04} & \textbf{0.61 $\pm$ 0.05} & \textbf{0.88 $\pm$ 0.04} &\textbf{0.55 $\pm$ 0.05} & \textbf{740 $\pm$ 40} & \textbf{640 $\pm$ 40} & \textbf{0.7 $\pm$ 0.04} & \textbf{0.55 $\pm$ 0.05 }\\
\bottomrule
\end{tabular}
}
\end{table*}
\subsection{How \(\kappa\) in RFB influences performance?} As described in \Secref{rotfb}, RFB concentration \(\kappa\) regularizes the diversity of policies for each environment. One the one hand, concentration should be high to ensure non-overlapping policy parametrized clusters \(\pi_z\) for different \(h\), while at the same time it should not exceed certain value to control the diversity of policies in the environment, preventing collapsed solutions. \Figref{fig:kappa} shows that lower values of \(\kappa\), meaning task-vectors \(z_{\text{FB}}\) are sampled with high deviation around \(h\), likely producing overlapping clusters. As \(\kappa\) grows, task-vectors become more specialized, lowering variance which results in higher performance.

\section{Conclusion \& Limitations}\label{conclusion}

In this paper, we introduce \textbf{Belief-FB (BFB)} and \textbf{Rotation-FB (RFB)}, two methods that extend the Forward-Backward representations to handle dynamics mismatches, bridging the gap towards truly adaptive agents. At first, we identify a critical limitation in existing BFMs both theoretically and empirically: interference arises when training procedure relies on naively sampling policy-encoding latent directions for transition samples coming from conflicting dynamics. To address this, we learn hidden context variables (belief states) via a transformer encoder and use them as additional conditioning (Belief-FB) to the FB. Further, we improve latent-direction sampling by aligning task-relevant abstractions with environment-specific context, ensuring non-overlapping distinct environment specific regions in policy-encoding latent space. Both BFB and RFB demonstrate theoretical and empirical improvements over prior methods. However, limitations include evaluations on a narrow set of dynamics mismatches and the introduction of the additional hyperparameters, such as \(\kappa\) that controls policy diversity across environments. Moreover, both BFB and RFB inherit all of the drawbacks of the vanilla FB representations: limitation to only linear reward representations and convergence guarantees. Also, random exploration at test time could fail at more complex environments and combining BFB and RFB together with more clever exploration methods at test time \citep{grillotti2024quality, urpi2025epistemically} would make methods more scalable.

As future research directions, it would be valuable to investigate whether other zero-shot RL methods, those not based on successor-measure estimation, exhibit similar interference issues, and to scale our approach to more complex benchmarks such as XLand-MiniGrid \citep{nikulin2024xland, nikulin2025xland100b} or Kinetix \citep{matthews2025kinetix}.
\section{Acknowledgments}
This work was supported by the The Ministry of Economic Development of the Russian Federation in accordance with the subsidy agreement (agreement identifier 000000C313925P4H0002; grant No 139-15-2025-012).
\newpage

\bibliography{iclr2026_conference}
\bibliographystyle{iclr2026_conference}

\clearpage
\appendix
\section{Extended Related Works and Background}\label{app:literature}
\subsection{Background}
\paragraph{Contexual Markov Decision Process.} Throughout paper we will be dealing with a Contextual Markov Decision Process (CMDP), defined by a tuple $\bigr \langle \mathcal{C}, \mathcal{S}, \mathcal{A}, \gamma, \mathcal{M}\bigr \rangle$, where $\mathcal{C}$ is a context space and $\mathcal{S}, \mathcal{A}$ are shared state and action spaces across environments. Function $\mathcal{M}$ maps particular context $c \in \mathcal{C}$ to respective MDP, \ie $\mathcal{M}(c) = \bigr \langle \mathcal{S}, \mathcal{A}, \mathcal{T} ^c, R^c, \mu ^c, \gamma\bigr \rangle$ with context-dependent transition function $\mathcal{T}^c : \mathcal{S} \times \mathcal{A} \times \mathcal{C} \xrightarrow{} \mathcal{S}$, $\mu^c$ being an initial distribution over states and $\gamma \in (0, 1)$ a discount factor. Intuitively, the context \(c\in\mathcal{C}\) represents a fixed environmental configuration, such as obstacle positions, layout geometry, dynamics vector parameters or seed. Throughout this work, the context remains static within each episode, consistent with prior literature \citep{modi2018markov,kirk2023survey,teoh2025on}. A policy $\pi : \mathcal{S} \xrightarrow{} \Delta\mathcal{A}$ is optimal for context $c$ for the reward function \(R\) if it maximizes expected discounted future reward, \ie \(\pi^* _{c, R} (s_0, a_0) = \operatorname*{arg\,max} _\pi \mathbb{E}[\sum \gamma^t R(s_t, a_t) |s_0, a_0, \pi, c]\).

When the context is fully observable, augmenting the state space with the given context reduces the CMDP to a standard MDP, eliminating the need to model distinct dynamics \(\mathcal{T} ^c\), rewards \(R^c\) or initial states \(\mu ^c\). However, if the context is partially observable, the learned model must infer and track the uncertainty over true hidden configuration to maintain theoretical optimality guarantees. Such task can be framed as posterior estimation $p(c|\mathcal{H})$ or \textit{belief} over possible contexts $c$ given accumulated history $H$. 

Most successful methods for deriving an optimal policy across arbitrary tasks from a task-agnostic dataset leverage successor features \citep{dayan1993improving, barreto_suc_feat, borsa2018universal, park2024foundation, zhu2024distributional} or their continuous counterpart, successor measures \citep{blier2021learning, touati2021learning, touati2022does, agarwal2025proto, jeen2024zeroshot}.
In this work, we focus on the latter framework, specifically its instantiation via forward-backward representations \citep{touati2021learning}. Below, we briefly outline its key properties.

\paragraph{Zero-Shot RL.} Given an offline dataset of transitions \(\mathcal{D} = \{(s_i, a_i, s_{i+1})\}^{|\mathcal{D}|} _{i=1}\) generated by an unknown behavior policies, the agent’s objective is to learn a compact abstraction of the environment from which it is possible . At test time, this abstraction helps to  obtain optimal policy for \textit{any} reward function \(r_{test}\) which defines a particular \(\textit{task}\). Reward function can be specified either as a small dataset of reward-labeled states $\mathcal{D}_{test} = \{(s_i, r_{test} (s_i)\}^k _{i=1}$ or as a direct mapping $s \xrightarrow{} r_{test}(s)$. While some prior works assume access to the context labels \citep{NEURIPS2019_2c048d74}, we focus on the setting where the context is unknown and must be inferred from the data. Alternative formulations of zero-shot RL exist under other formalisms, and we refer to \citep{kirk2023survey} for comprehensive overview.

\subsection{Related Literature}

\paragraph{Domain Adaptation and Transfer Learning in RL.} While our work will focus on domain adaptation applied to estimating successor measure for various dynamics mismatches, we start by briefly reviewing more general ideas in classic domain adaptation and refer to \citep{kouw2019review} for detailed overview. Most  methods for domain adaptation can be categorized into \textit{importance-weighting} \citep{bickel2007discriminative,uehara2016generative, sonderby2016amortised} and \textit{domain-invariant feature learning} \citep{fernando2013unsupervised,eysenbach2021offdynamics,xing2021domain, zhang2020learning} approaches. Former methods estimate the likelihood ratio of examples under samples from target domain versus samples from source, which is then used to recalibrate examples from the source domain. The latter approaches learn a unified representation of the environment, targeting to extract only task-relevant abstraction, negating distracting information. 

The most relevant approach which enables FB representations to generalize across dynamics is \textit{Contexual FB} \citep{jeen2024dynamics}. This approach uses importance-weighting formalism and introduces two classifiers, which estimate the likelihood of transitions \((s_t, a_t)\) and \((s_t, a_t, s_{t+1})\) being from train or test context and augment the reward function to account for those discrepancies in the dynamics. If augmented reward function lies in the linear span of the \(\mathcal{Z}\) space during FB training, then the policy can be extracted as described in \Eqref{eq:fb_policy}. However, such an approach requires training classifiers from scratch for each novel layout of the environment, limiting its applicability.

\paragraph{Meta-RL.} Another major line of related works, Meta-Reinforcement Learning (Meta-RL), focuses on few-shot domain adaptation to unseen tasks or dynamics \citep{beck2024surveym}. The significant part of research in Meta-RL is dedicated to explicitly learning the \textit{belief} by collecting a history of interactions with the environment on inference during test-time \citep{zintgraf2020varibad, dorfman2021offlinemeta, rakelly2019efficient}. However, recent works show that it is possible to quantify the \textit{belief} without learning the posterior implicitly \citep{laskin2022incontext, lee2023supervised, zisman2024emergence, sinii2024incontext, zisman2025ngram, tarasov2025yes, polubarov2025vintix}. Leveraging in-context ability of transformers \cite{NIPS2017_3f5ee243}, one can learn an end-to-end supervised model, while the transformer's context will absorb into robust representation the adaptation-relevant information thus enabling fast adaptation. We also leverage this in-context ability to construct the belief representation of the dynamics the agent currently in, but instead operating in a zero-shot manner.

\section{Proofs}\label{Appendix:Proofs}
=
\paragraph{Notation recap.}
Let $M^\pi(s,a,\cdot)$ be the successor measure of policy $\pi$ and $\rho$ the reference state--action measure used by FB training. As in the main text, FB seeks low-rank factors $F,B$ such that
\[
M^\pi(s,a,\mathrm{d}s'\mathrm{d}a') \approx F(s,a,z)^\top B(s',a')\,\rho(\mathrm{d}s'\mathrm{d}a')
\]
for policies $\pi=\pi_z$. For a set of $k$ CMDPs with optimal policies $\{\pi_i^\star\}_{i=1}^k$ and successor measures $\{M^{\pi_i^\star}\}_{i=1}^k$ we define the \emph{worst-case class approximation error}
\[
\varepsilon_k^\star \;:=\; \inf_{F,B}\, \max_{1\le i\le k}
\Big\|\, M^{\pi_i^\star} - F(\cdot,\cdot,z_i)^\top B(\cdot)\,\Big\|_{L^2(\rho)}.
\]
We write $\widehat F,\widehat B$ for the trained factors and set the (finite-sample / optimization) training discrepancy
\[
\Delta_{\mathrm{est}} \;:=\; \max_{1\le i\le k}
\Big\|\, \widehat F(\cdot,\cdot,z_i)^\top \widehat B(\cdot) - F^\star(\cdot,\cdot,z_i)^\top B^\star(\cdot)\,\Big\|_{L^2(\rho)},
\]
where $(F^\star,B^\star)$ is a minimizer in the definition of $\varepsilon_k^\star$ (any minimizer will do).
Unless otherwise noted we evaluate expectations w.r.t.\ a test distribution $\rho_{\mathrm{test}}$ that is absolutely continuous w.r.t.\ $\rho$ (Assumption~1 in the main paper), with density ratio bounded by $\kappa := \sup_{s,a}\tfrac{\mathrm{d}\rho_{\mathrm{test}}}{\mathrm{d}\rho}(s,a)<\infty$.

\begin{lemma}[Uniform successor-to-value stability]\label{lem:succ-to-value}
Suppose that for some $\varepsilon\ge 0$,
\[
\sup_{(s_0,a_0)}\left\|
F(s_0,a_0,z_R)^\top B(\cdot) -
\frac{M^{\pi_{z_R}}(s_0,a_0,\cdot)}{\rho(\cdot)}
\right\|_{L^2(\rho)} \le \varepsilon.
\]
Then for any bounded reward $\|r\|_\infty\le R$,
\(
\|\,Q^*_r - Q^{\pi_{z_R}}_r\,\|_\infty \le \frac{3}{1-\gamma}R\,\varepsilon.
\)
\end{lemma}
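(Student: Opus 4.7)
My plan is to convert the hypothesized $L^2(\rho)$ closeness of $F(\cdot,\cdot,z_R)^{\top} B(\cdot)$ to the normalized successor density $M^{\pi_{z_R}}(\cdot,\cdot,\cdot)/\rho(\cdot)$ into a uniform pointwise bound on an FB-induced approximate action-value, and then run the standard one-step approximate-greedy performance-difference argument that amplifies the error by a factor $1/(1-\gamma)$.

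\textbf{Step 1 (Cauchy--Schwarz in $L^2(\rho)$).} Introduce the FB surrogate $\widehat{Q}(s,a):=F(s,a,z_R)^{\top} z_R$ with $z_R:=\int B(s')\,r(s')\,\rho(ds')$, so that
\[
\widehat{Q}(s,a) - Q^{\pi_{z_R}}_r(s,a) \;=\; \int r(s')\,\Bigl[F(s,a,z_R)^{\top} B(s') - \frac{M^{\pi_{z_R}}(s,a,ds')}{\rho(ds')}\Bigr]\,\rho(ds').
\]
A single Cauchy--Schwarz paired with $\|r\|_{L^2(\rho)}\le \|r\|_\infty\le R$ (valid because $\rho$ is a probability measure) yields the uniform bound $\sup_{s,a}\,|\widehat{Q}(s,a)-Q^{\pi_{z_R}}_r(s,a)|\le R\varepsilon$.

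\textbf{Steps 2--3 (approximate greediness and contraction).} Since $\pi_{z_R}$ is \emph{exactly} greedy w.r.t.\ $\widehat{Q}$ by definition (\ref{eq:fb_policy}), combining this with Step~1 and two triangle inequalities gives
\[
Q^{\pi_{z_R}}_r\bigl(s,\pi_{z_R}(s)\bigr)\;\ge\; \max_a Q^{\pi_{z_R}}_r(s,a)\;-\;2R\varepsilon,
\]
i.e.\ $\pi_{z_R}$ is $(2R\varepsilon)$-greedy with respect to its own action-value. The classical decomposition
\[
V^*(s)-V^{\pi_{z_R}}(s) \;=\; \bigl(Q^*_r-Q^{\pi_{z_R}}_r\bigr)(s,a^*)\;+\;\bigl(Q^{\pi_{z_R}}_r(s,a^*)-V^{\pi_{z_R}}(s)\bigr),
\]
with $a^*\in\arg\max_a Q^*_r(s,a)$, bounds the first summand by $\gamma\|V^*-V^{\pi_{z_R}}\|_\infty$ via the Bellman equation and the second by $2R\varepsilon$ via the approximate greediness; taking $\sup_s$ and rearranging gives $\|V^*-V^{\pi_{z_R}}\|_\infty\le 2R\varepsilon/(1-\gamma)$. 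Converting back to $Q$-functions through $Q^*_r-Q^{\pi_{z_R}}_r=\gamma\,\mathbb{E}_{s'}[V^*-V^{\pi_{z_R}}]$, and loosening by an extra $R\varepsilon$ triangle term against $\widehat{Q}$, delivers the claimed $\|Q^*_r-Q^{\pi_{z_R}}_r\|_\infty \le 3R\varepsilon/(1-\gamma)$.

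\textbf{Main obstacle.} The only conceptually delicate move is Step~1: one must pair the $L^2(\rho)$ error with a test function of controlled $L^2(\rho)$ norm, which is where $\rho$ being a probability measure enters. This is precisely where the downstream regret result (Theorem~\ref{eq:regret}) will need Assumption~\ref{ass:coverage} to introduce a bounded Radon--Nikodym factor when the evaluation distribution is $\rho_{\text{test}}$ rather than $\rho$. Everything else is routine approximate-policy-iteration bookkeeping; the constant $3$ rather than the sharper $2\gamma/(1-\gamma)$ is pure slack absorbed by the final triangle inequality.
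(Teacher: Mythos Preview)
Your proposal is correct and follows essentially the same route as the paper's own sketch: convert the uniform $L^2(\rho)$ successor-measure error into a pointwise $Q$-error via Cauchy--Schwarz (what the paper phrases as ``the linear functional $M\mapsto\int r\,\mathrm{d}M$ has operator norm $\le\|r\|_\infty$''), then amplify through the Bellman contraction to obtain the $1/(1-\gamma)$ factor. Your write-up is in fact more explicit than the paper's sketch, and you correctly observe that the argument actually yields the sharper constant $2\gamma/(1-\gamma)$ before loosening to $3/(1-\gamma)$.
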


\begin{proof}[Proof sketch]
By standard successor-occupancy identities,
$Q_r^{\pi}(s_0,a_0)=\int r(s,a)\,M^\pi(s_0,a_0,\mathrm{d}s\mathrm{d}a)$.
The linear functional $M\mapsto \int r\,\mathrm{d}M$ has operator norm
$\le \|r\|_\infty$.
Combining the uniform $L^2(\rho)$ error on $M/\rho$ with the contraction of the
Bellman resolvent yields the stated $(3/(1-\gamma))R$ factor (details as in the cited stability proofs; constants unchanged).
\end{proof}
\begin{theorem}[Regret bound for multiple dynamics with decoupled errors]\label{thm:regret-appendix}
Under Assumption~1 and for any bounded reward $\|r\|_\infty \le R$, the policy extracted from the
trained factors for CMDP $i$ (namely $\pi_{z_i}$ with $z_i$ computed from $r$ and $\widehat B$) satisfies
\[
\mathbb{E}_{(s,a)\sim \rho_{\mathrm{test}}}
\!\left[\,Q^*_r(s,a)-Q^{\pi_{z_i}}_r(s,a)\,\right]
~\le~
\frac{3}{1-\gamma}\,R\,\big(\varepsilon_k^*+\Delta_{\mathrm{est}}\big).
\]
Moreover, $\varepsilon_{k+1}^* \ge \varepsilon_k^*$ (monotonicity in $k$).
\end{theorem}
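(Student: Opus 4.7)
}

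The plan is to reduce the regret to the successor-measure approximation error on each CMDP $i$ via the stability Lemma~\ref{lem:succ-to-value}, then split that approximation error by the triangle inequality into the optimal class error $\varepsilon_k^*$ and the trained-vs-optimal-factors gap $\Delta_{\mathrm{est}}$. First I would fix an arbitrary $i\in\{1,\dots,k\}$, let $(F^\star,B^\star)$ attain (or approximately attain) the infimum in the definition of $\varepsilon_k^*$, and write
\begin{equation*}
\widehat F(\cdot,\cdot,z_i)^\top \widehat B(\cdot) \;-\; \tfrac{M^{\pi_i^\star}}{\rho}
\;=\;
\underbrace{\bigl[\widehat F(\cdot,\cdot,z_i)^\top \widehat B(\cdot) - F^\star(\cdot,\cdot,z_i)^\top B^\star(\cdot)\bigr]}_{\text{training gap, } \le\ \Delta_{\mathrm{est}}}
\;+\;
\underbrace{\bigl[F^\star(\cdot,\cdot,z_i)^\top B^\star(\cdot) - \tfrac{M^{\pi_i^\star}}{\rho}\bigr]}_{\text{class gap, } \le\ \varepsilon_k^*}.
\end{equation*}
Taking $L^2(\rho)$ norms and using the triangle inequality gives a uniform-in-$i$ successor bound of $\varepsilon_k^*+\Delta_{\mathrm{est}}$.

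Next I would plug this composite bound into Lemma~\ref{lem:succ-to-value} with $\varepsilon:=\varepsilon_k^*+\Delta_{\mathrm{est}}$, obtaining the sup-norm value bound
\begin{equation*}
\bigl\|\,Q^*_r-Q^{\pi_{z_i}}_r\,\bigr\|_\infty
\;\le\; \tfrac{3}{1-\gamma}\,R\,(\varepsilon_k^*+\Delta_{\mathrm{est}}).
\end{equation*}
Since the right-hand side is a deterministic scalar, taking expectation over $(s,a)\sim \rho_{\mathrm{test}}$ is trivial and yields exactly the claimed inequality. The coverage assumption (Assumption~\ref{ass:coverage}) is what legitimizes applying the bound under $\rho_{\mathrm{test}}$ in the first place: since $\rho_{\mathrm{test}}\ll\rho$, the pointwise successor-ratio bound used inside Lemma~\ref{lem:succ-to-value} is meaningful on the support of $\rho_{\mathrm{test}}$; absent this, the sup-norm step could reference regions where $F^\top B$ was never trained.

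Finally, for monotonicity $\varepsilon_{k+1}^*\ge \varepsilon_k^*$, I would argue directly from the definition: enlarging the index set from $\{1,\dots,k\}$ to $\{1,\dots,k{+}1\}$ can only increase the inner max, so for every fixed $(F,B)$ the max-over-$(k{+}1)$ errors is $\ge$ the max-over-$k$ errors, and taking $\inf_{F,B}$ preserves the inequality.

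The main obstacle I anticipate is not the algebra but the bookkeeping around Lemma~\ref{lem:succ-to-value}: the lemma is stated in sup-over-$(s_0,a_0)$ form with a single-policy $L^2(\rho)$ guarantee, whereas $\varepsilon_k^*$ and $\Delta_{\mathrm{est}}$ are defined as a max over $i$ of $L^2(\rho)$ norms of the joint object $F^\top B$ (not $F^\top B/\rho$). Making the conversion rigorous requires either (i) identifying $F^\top B$ with the Radon--Nikodym derivative $M^{\pi}/\rho$ up to the same $L^2(\rho)$ gap (which is the standard FB convention), or (ii) inserting an intermediate pointwise-in-$(s_0,a_0)$ bound and absorbing any constant into $\Delta_{\mathrm{est}}$. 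I would handle this by stating the implicit assumption that $\rho$ is the same reference measure in both definitions and that the uniform-in-$(s_0,a_0)$ hypothesis of Lemma~\ref{lem:succ-to-value} follows from the (suitably strengthened) $L^2(\rho)$ control, possibly via Cauchy--Schwarz with the $\kappa$ from the density-ratio bound; everything else is a clean two-term triangle inequality plus a contraction factor.
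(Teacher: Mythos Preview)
Your proposal is correct and follows exactly the paper's approach: apply Lemma~\ref{lem:succ-to-value} with $\varepsilon=\varepsilon_k^*+\Delta_{\mathrm{est}}$ (your triangle-inequality decomposition makes explicit what the paper leaves implicit), then bound the expectation by the sup norm, and argue monotonicity from the max over a larger index set. Your caveat about the mismatch between the sup-over-$(s_0,a_0)$ hypothesis of the lemma and the $L^2(\rho)$ definitions of $\varepsilon_k^*,\Delta_{\mathrm{est}}$ is well-spotted and is indeed a loose end in the paper's own (very brief) proof, not a defect in your plan.
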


\begin{proof}
Applying Lemma~\ref{lem:succ-to-value} with $\varepsilon=\varepsilon_k^*+\Delta_{\mathrm{est}}$ yields
$\|Q^*_r-Q^{\pi_{z_i}}_r\|_\infty \le \tfrac{3}{1-\gamma}R(\varepsilon_k^*+\Delta_{\mathrm{est}})$.
Taking expectation gives the displayed inequality since
$\mathbb{E}_{\rho_{\mathrm{test}}}[f]\le \|f\|_\infty$.
Monotonicity is immediate because $\max$ over a larger index set cannot decrease.
\end{proof}

\paragraph{Discussion (Theorem~1).}
The upper bound separates an \emph{intrinsic} model-class term $\varepsilon_k^\star$ (harder when more heterogeneous CMDPs are included) from a \emph{finite-sample/optimization} term $\Delta_{\mathrm{est}}$ (which can shrink with more data). Thus, adding CMDPs enlarges the worst-case \emph{approximation class} but may still reduce empirical regret if $\Delta_{\mathrm{est}}$ decreases.

\begin{assumption}[Block-separable parameterization]\label{ass:gating}
There exists a partition $\{\mathcal{S}_j\}_{j=1}^L$ of task directions and a routing function $g:\mathcal{Z}\to[L]$ such that the model uses disjoint parameter blocks $(F_j,B_j)$:
for $z\in\mathcal{S}_{j}$ the prediction is $F_j(s,a,z)^\top B_j(\cdot)$ and no other block parameters are used.
\end{assumption}

\begin{theorem}[Decoupling under block-separable parameters]\label{thm:block-sep}
Assume \Cref{ass:gating}. Let $k_{\max}=\max_j |\mathcal{S}_j|$.
Then the training objective decouples across blocks $j$, and the worst-case uniform class error satisfies
\[
\varepsilon_k^* ~=~ \max_{1\le j\le L}\;\varepsilon_{|\mathcal{S}_j|}^*
~\le~ \varepsilon_{k_{\max}}^*.
\]
Consequently, the regret bound in \Cref{eq:regret} depends on $k_{\max}$ (not on $k$).
\end{theorem}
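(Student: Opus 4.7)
The plan is to exploit the fact that, under Assumption~\ref{ass:gating}, the parameter space factorizes into independent blocks and the objective defining $\varepsilon_k^\star$ is a maximum of block-local terms. First I would rewrite
\[
\varepsilon_k^\star
=\inf_{F,B}\max_{1\le i\le k}\bigl\|M^{\pi_i^\star}-F(\cdot,\cdot,z_i)^\top B(\cdot)\bigr\|_{L^2(\rho)}
\]
by grouping indices $i$ according to the partition $\{\mathcal{S}_j\}_{j=1}^L$ and using the fact that, when $z_i\in\mathcal{S}_j$, the prediction depends only on $(F_j,B_j)$. This turns the inner quantity into
\[
\max_{1\le j\le L}\ \max_{i\in \mathcal{S}_j}\bigl\|M^{\pi_i^\star}-F_j(\cdot,\cdot,z_i)^\top B_j(\cdot)\bigr\|_{L^2(\rho)},
\]
which is a maximum of $L$ functionals, each depending on disjoint parameter blocks $(F_j,B_j)$.

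Second, I would invoke the standard decoupling identity for $\inf$--$\max$ of a separable objective: if $\Phi_j$ depends only on $x_j$ and the $x_j$'s range over independent sets, then $\inf_{x_1,\dots,x_L}\max_j \Phi_j(x_j)=\max_j \inf_{x_j}\Phi_j(x_j)$. The ``$\ge$'' direction is trivial (fix any $j_0$, bound $\max\ge \Phi_{j_0}\ge \inf\Phi_{j_0}$, then take max over $j_0$); the ``$\le$'' direction follows by near-minimizing each block independently and taking the product candidate. Applying this with $x_j=(F_j,B_j)$ and $\Phi_j=\max_{i\in\mathcal{S}_j}\|\cdot\|_{L^2(\rho)}$ yields
\[
\varepsilon_k^\star=\max_{1\le j\le L}\ \inf_{F_j,B_j}\ \max_{i\in\mathcal{S}_j}\bigl\|M^{\pi_i^\star}-F_j(\cdot,\cdot,z_i)^\top B_j(\cdot)\bigr\|_{L^2(\rho)}
=\max_{1\le j\le L}\varepsilon_{|\mathcal{S}_j|}^\star,
\]
where the last equality is just the definition of $\varepsilon_{|\mathcal{S}_j|}^\star$ restricted to the $|\mathcal{S}_j|$ CMDPs in block $j$ (each with its own private parameters).

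Third, I would conclude with the bound $\max_j\varepsilon_{|\mathcal{S}_j|}^\star\le \varepsilon_{k_{\max}}^\star$, which is an immediate consequence of the monotonicity statement already established in Theorem~\ref{thm:regret-appendix} (enlarging the index set cannot decrease the worst-case class error), since $|\mathcal{S}_j|\le k_{\max}$ for every $j$. Plugging the resulting inequality into the right-hand side of Theorem~\ref{thm:regret-appendix} gives a regret bound whose approximation term depends on $k_{\max}$ rather than on $k$.

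The main obstacle I anticipate is being precise about the decoupling identity: it requires that the candidate function class for the pair $(F,B)$ really be a product of block classes, with no shared weights or shared normalization layer across blocks; any implicit coupling (e.g., a single normalization constant tying $B_j$'s together) would invalidate the clean swap of $\inf$ and $\max$. This is exactly what Assumption~\ref{ass:gating} is designed to rule out, so the proof reduces to carefully invoking that assumption and then applying the elementary minimax identity above; no delicate regularity or measurability arguments are needed beyond those already used in Lemma~\ref{lem:succ-to-value}.
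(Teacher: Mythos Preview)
Your proposal is correct and follows essentially the same route as the paper's (sketch) proof: use the block-separability from Assumption~\ref{ass:gating} to decouple the objective across blocks, optimize each block independently, and then appeal to monotonicity for the final inequality. You are more explicit than the paper about the $\inf$--$\max$ swap (the paper simply says ``minimizers are obtained by solving each block independently''), but the underlying argument is identical.
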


\begin{proof}[Proof (sketch)]
By \Cref{ass:gating}, losses from tasks $z\in\mathcal{S}_j$ depend only on $(F_j,B_j)$, hence the empirical and population objectives decompose as a sum $\sum_{j=1}^L \mathcal{L}_j(F_j,B_j)$.
Minimizers are obtained by solving each block independently. The definition of $\varepsilon_m^*$ as the optimal uniform $L^2(\rho)$ error over $m$ tasks then yields
$\varepsilon_k^*=\max_j \varepsilon_{|\mathcal{S}_j|}^*\le \varepsilon_{k_{\max}}^*$.
\end{proof}

\paragraph{Discussion (Theorem~2).}
Partitioning $z$ into disjoint cones removes interference: optimization decouples by block, so adding new cones does not inflate the worst-case error beyond the hardest block. Practically, once $F,B$ have enough capacity for the largest block ($d\!\ge\!k_{\max}$ in a tabular analogy), the class error can be driven to zero \emph{without} growing with $k$.

Let $\{M_{\pi_i}\}$ be a collection of successor measure of the optimal policies $\{\pi_i\}^k _{i=1}$ for $k$ distinct CMDPs. Given a reference measure $\rho$ on $S\times A$, define the worst-case \emph{class approximation error} as
\begin{equation}
    \epsilon_k := \inf _{F,B} \max_{i\leq i \leq k} ||M_{\pi_i} - F(\cdot, \cdot, z_i)^TB(\cdot)||_{L^2 _{\rho}}
\end{equation}

\subsection{Forward–Backward (FB) training}\label{Appendix:FB-train}
\paragraph{Successor measure and FB factorization.}
For a policy $\pi$ and discount $\gamma\in(0,1)$, the successor measure $M^{\pi}(s_0,a_0,\cdot)$ is the (discounted) future occupancy of next states,
\[
M^{\pi}(s_0,a_0,X)\;=\;\sum_{t\ge 0}\gamma^t\,\Pr\!\big(s_{t+1}\in X \mid s_0,a_0,\pi\big),\qquad X\subseteq\mathcal S,
\]
and, for state-based rewards $r:\mathcal S\!\to\!\mathbb R$,
\[
Q_r^{\pi}(s_0,a_0)\;=\;\int r(s^+)\,M^{\pi}(s_0,a_0,ds^+).
\]
FB approximates $M^{\pi}$ (hence all $Q_r^{\pi}$) with a finite-rank factorization conditioned on a \emph{task vector} $z\in\mathcal Z\subset\mathbb S^{d-1}$:
\[
M^{\pi_z}(s,a,ds^+) \;\approx\; \big\langle F(s,a,z),\,B(s^+)\big\rangle \,\rho(ds^+),
\]
where $F:\mathcal S\times\mathcal A\times\mathcal Z\to\mathbb R^d$ is the \emph{forward} map, $B:\mathcal S\to\mathbb R^d$ the \emph{backward} map, $\langle\cdot,\cdot\rangle$ denotes the Euclidean inner product, and $\rho$ is a reference distribution over next states drawn from the offline dataset.\footnote{In some variants $B$ depends on $(s,a)$; our implementation uses $B(s)$ as in the original formulation.} From the factorization it follows that
\begin{equation}
\begin{aligned}
Q_r^{\pi_z}(s,a)\;&\approx\;\int r(s^+)\,\big\langle F(s,a,z),\,B(s^+)\big\rangle\,\rho(ds^+)\\
&=\;\big\langle F(s,a,z),\,z_r\big\rangle,
\quad z_r\;\triangleq\;\mathbb{E}_{s^+\sim\rho}\!\left[r(s^+)B(s^+)\right].
\end{aligned}
\end{equation}

\paragraph{Greedy policy family.}
For each $z\in\mathbb S^{d-1}$, the \emph{greedy} policy associated with the representation is
\begin{equation}
\pi_z(s)\;\in\;\arg\max_{a\in\mathcal A}\;\big\langle F(s,a,z),\,z\big\rangle.
\label{eq:greedy}
\end{equation}
In discrete action spaces we take the exact maximizer; in continuous control we use an actor network to approximate~\eqref{eq:greedy} (DDPG-style).

\paragraph{Bellman identity for the successor measure.}
Let $s_{t+1}\sim T(\cdot\mid s_t,a_t)$ and $a_{t+1}\sim \pi_z(\cdot\mid s_{t+1})$. For any \emph{anchor} $s^+\sim \rho$, the successor measure satisfies
\begin{equation}
\begin{aligned}
\underbrace{\frac{M^{\pi_z}(s_t,a_t,ds^+)}{\rho(ds^+)}}_{\text{``density'' w.r.t. }\rho}
\;=\;
\mathbf{1}\{s^+=s_{t+1}\}
\;+\;\gamma\,\mathbb{E}\!\left[\frac{M^{\pi_z}(s_{t+1},a_{t+1},ds^+)}{\rho(ds^+)}\right].
\end{aligned}
\end{equation}
FB enforces this identity by regressing the scalar score $\langle F(\cdot),B(s^+)\rangle$ against the right-hand side across random anchors $s^+$.

\paragraph{Training objective (anchor regression).}
Given a dataset $D=\{(s_t,a_t,s_{t+1})\}$, sample $z\sim\mathcal Z$ (e.g., uniformly on $\mathbb S^{d-1}$ or from a mixture that also uses $B$), compute $a_{t+1}\approx \pi_z(s_{t+1})$ via~\eqref{eq:greedy}, and draw anchors $s^+\sim \rho$. Using target networks $\widehat F,\widehat B$ (Polyak-averaged), the FB loss is
\begin{align}
\mathcal L_{\text{FB}} &= 
\mathbb{E}_{(s_t,a_t,s_{t+1})\sim D}\;
\mathbb{E}_{z\sim\mathcal Z}\;
\mathbb{E}_{s^+\sim \rho}\;
\Biggl[
\big\langle F(s_t,a_t,z),B(s^+)\big\rangle
-\mathbf{1}\{s^+\!=\!s_{t+1}\} \notag\\
&\hspace{4em} 
-\gamma\,\big\langle \widehat F(s_{t+1},a_{t+1},z),\widehat B(s^+)\big\rangle
\Biggr]^2.
\label{eq:fb-anchor}
\end{align}
On a discrete replay buffer (finite $\rho$), expanding the square in~\eqref{eq:fb-anchor} yields the practically convenient equivalent form
\begin{equation}
\boxed{
\begin{aligned}
\mathcal L_{\text{FB}} \;=\;
\mathbb{E}_{(s_t,a_t,s_{t+1},s^+)\sim D,\;z\sim\mathcal Z}&\Big[
\bigl(\langle F(s_t,a_t,z),B(s^+)\rangle
-\gamma\langle \widehat F(s_{t+1},a_{t+1},z),\widehat B(s^+)\rangle\bigr)^2 \\
&\quad -2\,\langle F(s_t,a_t,z),B(s_{t+1})\rangle
\Big],
\end{aligned}
}
\label{eq:fb-expanded}
\end{equation}
which we use in implementation. Gradients update $(F,B)$ while $\widehat F,\widehat B$ are updated by slow averaging. The actor (continuous actions) is trained to maximize $a\mapsto\langle F(s,a,z),z\rangle$.

\paragraph{Zero-shot RL procedure (test-time).}
FB is trained \emph{without rewards} in an unsupervised regime. At test time, for a new task specified by a reward function $r$ (or a small set of labeled states $\{(s_i,r(s_i))\}$), we:
\begin{enumerate}
  \item \textbf{Infer the task vector.} Form
  \[
  z_r \;=\; \mathbb{E}_{s^+\sim \rho}\big[r(s^+)\,B(s^+)\big]
  \]
  \item \textbf{Act greedily w.r.t.\ $z_r$.} Use the policy $\pi_{z_r}$ in~\eqref{eq:greedy}:
  $\displaystyle \pi_{z_r}(s)\in\arg\max_{a}\langle F(s,a,z_r),z_r\rangle.$
\end{enumerate}
If $z_r$ lies (approximately) in the linear span of task vectors encountered during training, then $Q_r^{\pi_{z_r}}(s,a)\approx \langle F(s,a,z_r),z_r\rangle$ and $\pi_{z_r}$ is near-greedy for $Q_r$ in the sense of our analysis.

\paragraph{Practical notes.}
(i) We normalize $z$, $B(s)$ to the hypersphere for stability; (ii) we mix \emph{uniform} and \emph{backward-induced} sampling for $z$ during training; (iii) target networks and large anchor batches stabilize the regression in~\eqref{eq:fb-anchor}--\eqref{eq:fb-expanded}; (iv) in continuous control we learn an actor (DDPG-style) to approximate the argmax in~\eqref{eq:greedy}. The entire pipeline requires \emph{no} reward labels during training, enabling zero-shot extraction for arbitrary test-time rewards.



\section{Environment Descriptions}
    \label{Appendix:Envs}
    \subsection{Randomized-Doors} \label{Appendix:Env-Doors}
        The Randomized-Doors MiniGrid environment (\Figref{env:doors}) is a discrete-state, discrete-action finite horizon deterministic environment in which agent has an objective to go to goal location with maximum return of $1$. Each episode terminates after $100$ steps or after reaching goal location. The randomization determines possible open doors locations, fully specifying particular layout. In our experiments, the observation state of an agent consists of $(x, y)$
        coordinates tuple, making it partially observable. Such setting requires to properly update beliefs over unobservable layout configuration type. The action space consists of four actions, namely 
        \{\texttt{up}, \texttt{down}, \texttt{right}, \texttt{left}\}, while $(x, y)$ coordinates across both axes are bounded by grid size, which we take to be $9\times 9$.
        
        \begin{figure}[h]
            \begin{subfigure}[b]{0.3\textwidth}
                \centering
                \includegraphics[width=\columnwidth]{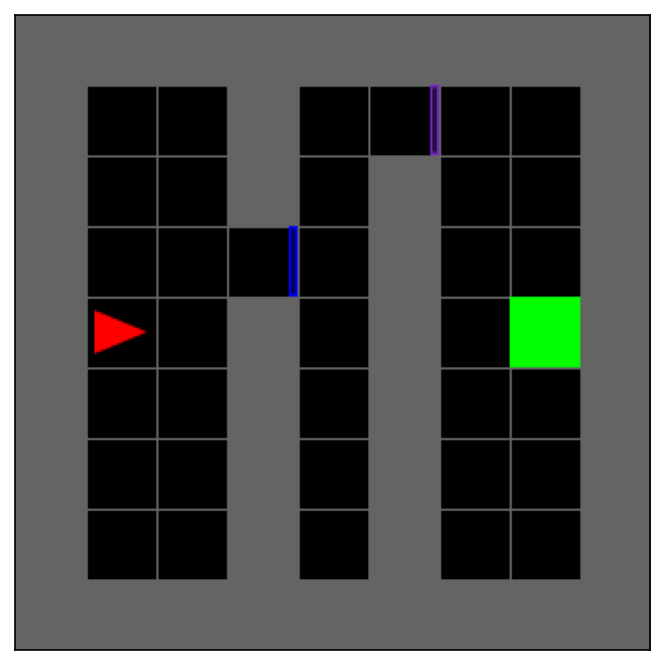}
                \caption{First type}
            \end{subfigure}
            \hfill
            \begin{subfigure}[b]{0.3\textwidth}
                \centering
                \includegraphics[width=\columnwidth]{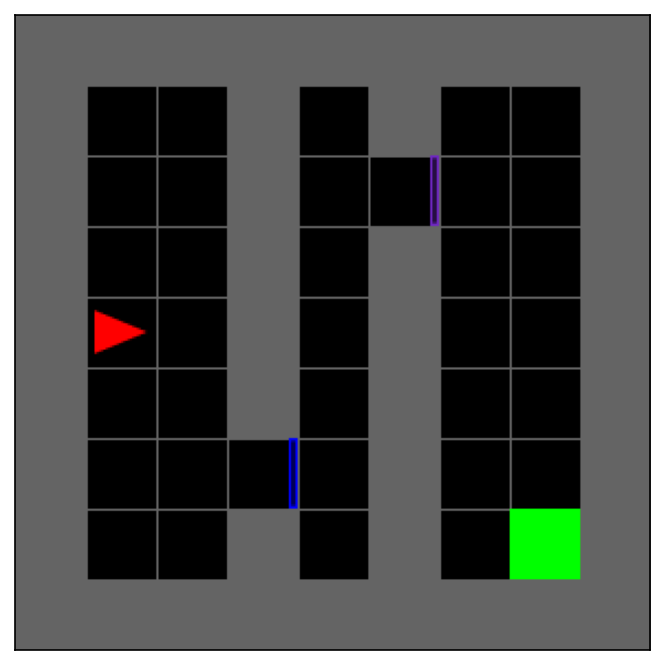}
                \caption{Second Type}
            \end{subfigure}
            \hfill
            \begin{subfigure}[b]{0.3\textwidth}
                \centering
                \includegraphics[width=\columnwidth]{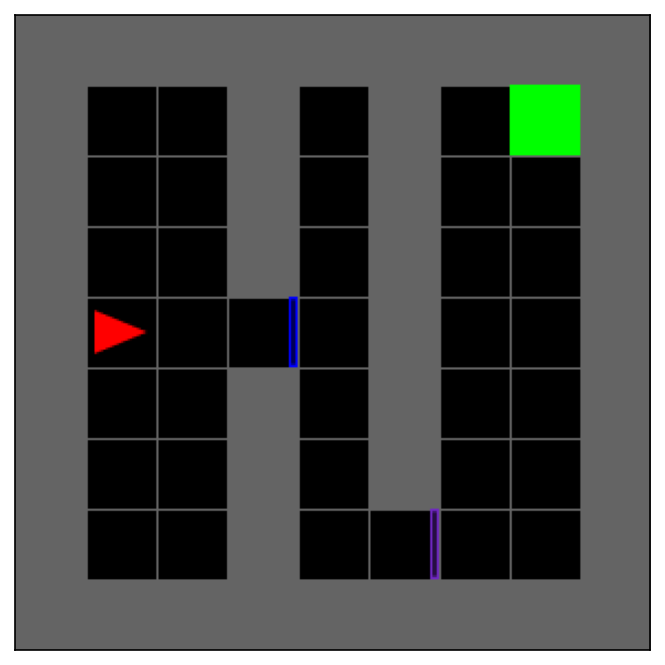}
                \caption{Third Type}
            \end{subfigure}
            \hfill
            \caption{Several possible layouts are visualized, each corresponding to unique possible doors configurations. The agent is denoted as a red triangle. The task specification (goal position) with reward of $1$ is denoted by green square and is also randomized. It is a custom implementation based on Empty MiniGrid (\url{https://minigrid.farama.org}).}
            \label{env:doors}
        \end{figure}    
    \subsection{Randomized Four-Rooms} \label{Appendix:Fourrooms}
    The Randomized Four-Rooms MiniGrid environment \Figref{fig:4rooms} is a modification of classic Four-Rooms and is a discrete-state, discrete-action, deterministic partially observable environment. For each episode, the maze layout (grid type) is generated randomly, ensuring all of the four rooms are connected with exactly single door. Observation state consists of \((x, y)\) coordinates, making this environment hard and checks whether agent could successfully estimate uncertainty over hidden configurations solely based on number of occurrence of each transition, recovering dynamics. In our experiments, we consider \(11\times 11\) bounds for height and width. 

    Observation space consists of raw discrete \((x, y)\) coordinates on the grid, while actions correspond to a set of possible moves \{\texttt{up, down, left, right}\}. For every layout we record \(500\) episodes of length \(100\), yielding a dataset that covers almost all possible \((s, a)\) transitions. For testing on unseen configurations, we fix agent starting position to coordinates of the first empty cell and evaluate performance across \(3\) static goal positions, farhest away from starting position.

    \begin{figure*}[h]
        \centering
        \includegraphics[width=1.\linewidth]{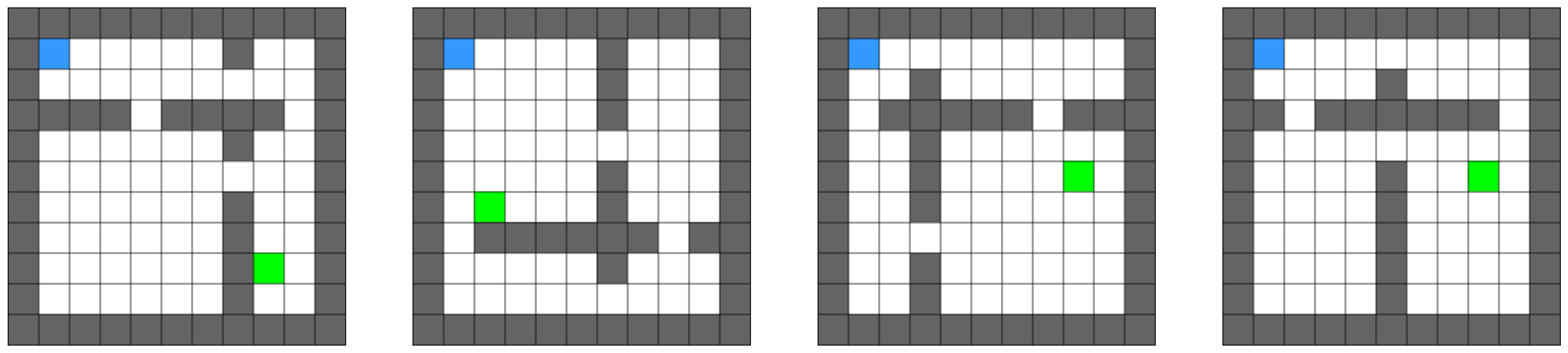}
        \vspace{-5pt}
        \caption{
        \footnotesize
        \textbf{Different layout configurations from randomized Four-Rooms environment.}
         During inference, the goal for the agent (depicted in blue) is to achieve green location. In our experiments we fix starting agent position and fix \(3\) goals, one for each room.
        }
        \label{fig:4rooms}
        \vspace{-5pt}
    \end{figure*}
    \subsection{Ant-Wind} \label{Appendix:Ant-Wind}
    The AntWind environment is a modified version of the Ant locomotion task from the MuJoCo simulator, commonly used to test an agent's adaptability to changing dynamics. In this environment, an ant-like robot must learn to move forward while being subjected to external wind forces varying in magnitude and direction. In our experiments we consider 17 environments for training, covering three quadrants of possible wind directions on the circle, while leaving others for test, checking extrapolation on the fourth quadrant. 

    For our experiment, we collect dataset by training SAC \citep{haarnoja2018soft} on \(3/4\) of all possible directions, which results in 16 environments and hold out the other \(1/4\) for evaluation. Resulting dataset consists of \(3400\) transition tuples, where each environment configuration is represented as trajectory of length \(256\).
    
    \subsection{Randomized Pointmass} \label{Appendix:Pointmass}
    Randomized Pointmass is a modification of pointmass environment from D4RL \cite{fu2020d4rl}. Each episode the environment grid structure is randomized, ensuring all cells are interconnected. The observation space consists of \((x, y)\) transitions. Start position is determined as a first empty cell, while goal location is chosen to be the fartherst away from start (based on Manhattan distance) and ensuring existence of at least one valid trajectory (\eg through BFS).

    Observation space consists of \((\texttt{global}\: x,\texttt{global}\: y)\) position, similar to Four-Rooms. We fix dataset size to be \(1e^6\), only varying number of layouts and episodes per layout, while fixing episode length to \(250\). We use explore policy, which is a random policy with a portion of actions repeated ("sticky-actions").
    
    \begin{figure*}[h]
        \centering
        \includegraphics[width=1.\linewidth]{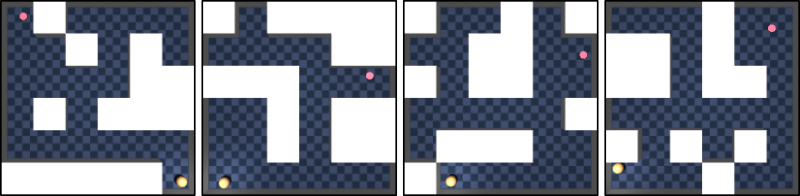}
        \vspace{-5pt}
        \caption{
        \footnotesize
        \textbf{Examples of pointmass grid variations.}
        }
        \label{fig:pointmass}
        \vspace{-5pt}
    \end{figure*}

\section{Experiments Details}\label{Appendix:experiments}
\paragraph{Randomized-Doors.} For didactic example from \Secref{sec:theory} we collect diverse dataset from different layout configurations (open door locations) such that visitation distribution over all states is non-zero. Black color denotes obstacles. The episode length is set to be $100$, which is equal to the context length of the transformer encoder for this experiment. Overall, we collect \(500\) episodes per layout and coverage heatmap is visualized in \Figref{fig:heatmaps}.


\begin{table}[h]
    \centering
    \vspace{-5pt}
    \caption{Comparison of proposed approaches against baselines on \textbf{test} (unseen) environments. \\Results for Fourrooms and Pointmass are averaged across \(20\) mazes configurations.}
    \label{tab:method_comparison}
    \adjustbox{max width=\textwidth}{%
    \begin{tabular}{lcccccc}
    \toprule
    \multirow{2}{*}{Environment (Test)} & \multicolumn{5}{c}{Method} \\
    \cmidrule(lr){2-7}           
                                 & Random & Vanilla-FB & HILP & Lap & \textbf{Belief-FB} & \textbf{Rotation-FB} \\
    \midrule
    Randomized-Fourrooms   & 0.05 \tiny\color{gray}\(\pm {0.01}\) & 0.15 \tiny\color{gray}\(\pm {0.06}\) & 0.2 \tiny\color{gray}\(\pm {0.02}\) & 0.1 \tiny\color{gray}\(\pm {0.1}\) & 0.4 \tiny\color{gray}\(\pm {0.02}\) & 0.61 \tiny\color{gray}\(\pm {0.02}\) \\
    Randomized-Pointmass   & 0.03 \tiny\color{gray}\(\pm {0.01}\) & 0.1 \tiny\color{gray}\(\pm {0.1}\) & 0.25 \tiny\color{gray}\(\pm {0.02}\) & 0.1 \tiny\color{gray}\(\pm {0.1}\) & 0.45 \tiny\color{gray}\(\pm {0.05}\) & 0.55 \tiny\color{gray}\(\pm {0.05}\) \\
    Ant-Wind    & 250 \tiny\color{gray}\(\pm {200.0}\) & 250 \tiny\color{gray}\(\pm {98.5}\) & 410 \tiny\color{gray}\(\pm {40.5}\) & 290 \tiny\color{gray}\(\pm {22.5}\) & 550 \tiny\color{gray}\(\pm {50.5}\) & 640 \tiny\color{gray}\(\pm {30.7}\) \\
    \bottomrule
    \vspace{-5pt}
    \label{tab:experiments}
    \end{tabular}
}
\end{table}

\begin{table}[h]
    \centering
    \vspace{-5pt}
    \caption{Comparison of proposed approaches against baselines on \textbf{train} environments. \\Results for Fourrooms and Pointmass are averaged across \(20\) mazes configurations.}
    \label{tab:train_method_comparison}
    \adjustbox{max width=\textwidth}{%
    \begin{tabular}{lcccccc}
    \toprule
    \multirow{2}{*}{Environment (Train)} & \multicolumn{6}{c}{Method} \\
    \cmidrule(lr){2-7}           
                                 & Random & Vanilla-FB & HILP & Lap & \textbf{Belief-FB} & \textbf{Rotation-FB} \\
    \midrule
    Randomized-Fourrooms   & 0.18 \tiny\color{gray}\(\pm {0.02}\) & 0.25 \tiny\color{gray}\(\pm {0.02}\) & 0.4 \tiny\color{gray}\(\pm {0.02}\) & 0.2 \tiny\color{gray}\(\pm {0.1}\) & 0.7 \tiny\color{gray}\(\pm {0.02}\) & 0.85 \tiny\color{gray}\(\pm {0.02}\) \\
    Randomized-Pointmass   & 0.0 \tiny\color{gray}\(\pm {0.05}\) & 0.2 \tiny\color{gray}\(\pm {0.2}\) & 0.45 \tiny\color{gray}\(\pm {0.1}\) & 0.15 \tiny\color{gray}\(\pm {0.15}\) & 0.76 \tiny\color{gray}\(\pm {0.18}\) & 0.88 \tiny\color{gray}\(\pm {0.2}\) \\
    Ant-Wind   & -190 \tiny\color{gray}\(\pm {250}\) & 390 \tiny\color{gray}\(\pm {120}\) & 410 \tiny\color{gray}\(\pm {90}\) & 340 \tiny\color{gray}\(\pm {150}\) & 680 \tiny\color{gray}\(\pm {80}\) & 740 \tiny\color{gray}\(\pm {70}\) \\
    \bottomrule
    \vspace{-5pt}
    \label{tab:train_experiments}
    \end{tabular}
}
\end{table}

\begin{figure}[h!]
\centering
        \begin{subfigure}[b]{0.3\textwidth}
            \centering
            \includegraphics[width=\columnwidth]{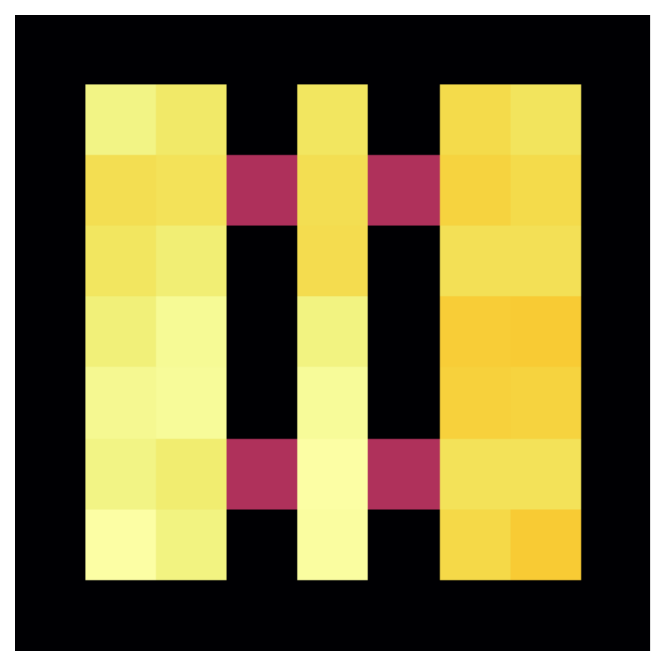}
            \caption{Randomized-Doors}
        \end{subfigure}
        \begin{subfigure}[b]{0.3\textwidth}
            \centering
            \includegraphics[width=\columnwidth]{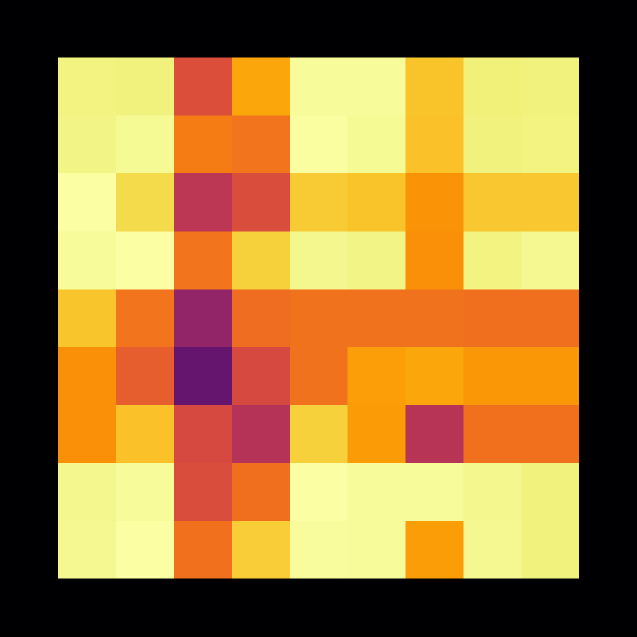}
            \caption{Randomized Four-rooms}
        \end{subfigure}
        \hfill
        \caption{Empirical state occupancy measures ($\rho$) visualizations of collected datasets for discrete-based environments.}
        \label{fig:heatmaps}
    \end{figure}

\begin{figure}[t!]
    \centering
    \includegraphics[width=\linewidth]{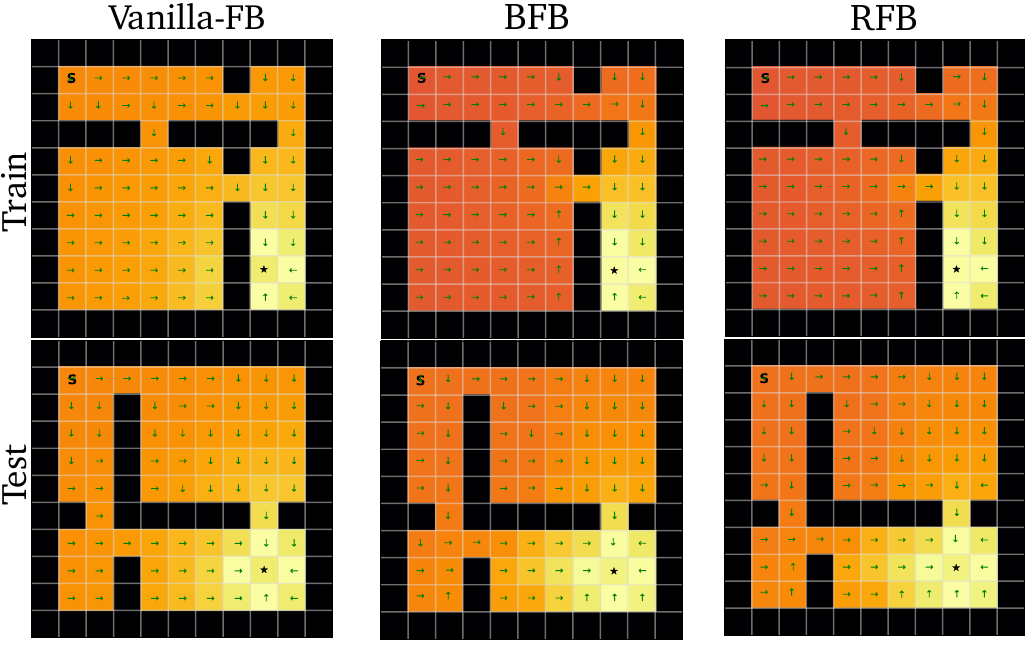}
    \caption{
    \footnotesize
    \textbf{Q-function and deterministic policy visualizations (\Eqref{eq:fb_policy})} on Randomized Four-Rooms environment. Vanilla-FB ignores environment structure and resulting policy moves through obstacles. BFB and RFB do not have such issue.
    }
    \label{fig:qfunc}
\end{figure}

\paragraph{Note on relience on random exploration during test time.}
Random exploration relience of BFB and RFB in highly complex environments may fail to discover crucial states needed to disambiguate dynamics identification. However, we emphasize that our work addresses a distinct bottleneck: existing behavioral foundation models (BFMs), particularly FB, tend to collapse when trained on offline data composed of mixed CMDPs. Consequently, training BFMs on large scale mixed multi-modal (in terms of dynamics) data would yield an averaged policy, thus limiting their current applicability to unimodal datasets (in terms of dynamics mismatch). Both BFB and RFB overcome this collapse. Developing smarter test-time exploration strategies to streamline dynamics identification remains an important direction for future research.

\subsection{Dataset Generation}
For Randomized Four-Rooms, we produce four training datasets with the following parameters:

\begin{table}[H]
    \centering
    \begin{tabular}{C{2cm}C{2cm}C{2cm}C{2cm}}
    \toprule
    \textbf{\# Transitions} & \textbf{\# layouts} & \textbf{\# episodes per layout} & \textbf{episode length} \\
    \midrule
    1000000 & 10 & 1000 & 100 \\
    1000000 & 20  & 500 & 100 \\
    1000000 & 30  & 250 & 100 \\
    1000000 & 50  & 150 & 100 \\
    \bottomrule
    \end{tabular}
    \caption{Details for Randomized Four-Rooms datasets}
\end{table}
\paragraph{Randomized Four-Rooms.} For experiments on Randomized Four-Rooms during dataset collection we generate randomly grid layout, ensuring that each room is interconnected by exactly one door. For evalution we fix agent start position to \((1, 1)\) with the goal of reaching \(3\) other goals, specified at other rooms. Each episode terminates after \(100\) steps. The evaluation protocol is averaged success rate across \(3\) across \(20\) environments. 
\paragraph{AntWind.} For AntWind we first collect trajectories by varying wind direction \(d\) and training an expert-like SAC agent. After training, we collected evaluation trajectories from trained agent.
This ensures that all directions are covered and explicitly sets dynamics context. As said in Experiments section, we train on \(16\) environments with wind directions corresponding to first \(3\) quadrants of circle, leaving other \(4\) (last quadrant) for hold out.

\section{Implementation Details}
\subsection{Forward-Backward Representations}\label{appendix: FB implementations}
\subsubsection{GPUs}
We run each experiment on 1 Nvidia RTX 4090. The overall training time (for both dynamics encoder and FB training) is approximately $1$ hour.
\subsubsection{Architecture}
The forward-backward architecture described below mostly follows the implementation by \cite{touati2022does}. All other additional hyperparameters for BFB and RFB are reported in Table \ref{table: fb hyperparameters}. Moreover, we should emphasize that our choice of transformer architecture for $f_\text{dyn}$ is mainly based on its abilities to encode large sequences, and other architectural designs (e.g State-Space Models, RNNs) can also be used. This choice does not change our observations from \Secref{belief}, \Secref{rotfb}.

\textbf{Forward Representation $F(s, a, z)$}. The input to the forward representation $F$ is always preprocessed. State-action pairs $(s, a)$ and state-task pairs $(s, z)$ have their own preprocessors which are feedforward MLPs that embed their inputs into a 512-dimensional space. These embeddings are concatenated and passed through a third feedforward MLP $F$ which outputs a $d$-dimensional embedding vector. Note: the forward representation $F$ is identical to $\psi$ used by USF so their implementations are identical (see Table \ref{table: fb hyperparameters}). Also, for stability reasons of TD learning, we make ensemble of $F$ and take their \texttt{mean} as aggregation function.

\textbf{Backward Representation $B(s)$.} The backward representation $B$ is a feedforward MLP that takes a state as input and outputs a $d$-dimensional embedding vector. 

\textbf{Actor $\pi(s, z)$.} Like the forward representation, the inputs to the policy network are similarly preprocessed. State-action pairs $(s, a)$ and state-task pairs $(s, z)$ have their own preprocessors which feedforward MLPs that embed their inputs into a 512-dimensional space. These embeddings are concatenated and passed through a third feedforward MLP which outputs a $a$-dimensional vector, where $a$ is the action-space dimensionality. A \texttt{Tanh} activation is used on the last layer to normalise their scale. Note the actors used by FB and USFs are identical (see Table \ref{table: fb hyperparameters}). For discrete environments, optimal policy is greedy, while for continuous DDPG-style is used for approximating \texttt{argmax}.

\textbf{Misc.} Layer normalisation and \texttt{Tanh} activations are used in the first layer of all MLPs to standardise the inputs as  recommended in original paper for both discrete and continuous becnhmarks. Baseline is taken from official repository \href{https://github.com/facebookresearch/controllable_agent/tree/main}{\texttt{contrallable agent}}.

\subsection{HILP}
We take \href{https://github.com/seohongpark/HILP}{official implementation} in JAX from \citet{park2024foundation}. All of the hyperparameters are the same as in original paper.
\begin{table}\caption{\textbf{Hyperparameters for FB.} Hyperparameters for Belief-FB and Rotation-FB are highlighted in}\label{table: fb hyperparameters}
\centering
\scalebox{1}{
\begin{tabular}{ll} 
\toprule
\textbf{Hyperparameter}                       & \textbf{Value}       \\
 \midrule
Latent dimension $d$                          & 150 (100 for discrete)    \\
$F$ / $\psi$ dimensions                             & (1024, 1024)            \\
$B$ / $\varphi$ dimensions                             & (256, 256, 256)                    \\
Preprocessor dimensions & (1024, 1024) \\
Std. deviation for policy smoothing $\sigma$  & 0.2                  \\
Truncation level for policy smoothing         & 0.3                  \\
Learning steps                                & 1,000,000            \\
 
Batch size                                    & 1024                  \\
Optimiser                                     & Adam                \\
Learning rate                                 & 0.0001 \\
Learning rate of \(f_{\text{dyn}}\)            & 0.0001
\\ 
Discount $\gamma$                             & 0.99, 0.98 (Maze) \\
Activations (unless otherwise stated)         & GeLU                 \\
Target network Polyak smoothing coefficient & 0.05                 \\
$z$-inference labels                          & 10,000              \\
$z$ mixing ratio                              & 0.5    \\
\midrule
\(\kappa\)                                     &   50, 100 for Pointmass        \\
Contexual representation \(h\) dimension & 150 (100 for discrete) \\
Next state predictor \(g_{\text{pred}}\) & (256, 256, 256) \\               
\bottomrule
\end{tabular}
}
\label{apndx:hypers}
\end{table}

\subsection{Task Sampling Distribution $\mathcal{Z}$}\label{appendix: z sampling}
\paragraph{Vanilla-FB.}
FB representations require a method for sampling the task vector $z$ at each learning step. \cite{touati2022does} employ a mix of two methods, which we replicate:

\begin{enumerate}
    \item Uniform sampling of $z$ on the hypersphere surface of radius $\sqrt{d}$ around the origin of $\mathbb{R}^d$,
    \item Biased sampling of $z$ by passing states $s \sim \mathcal{D}$ through the backward representation $z = B(s)$. This also yields vectors on the hypersphere surface due to the $L2$ normalization described above, but the distribution is non-uniform.
\end{enumerate}

We sample $z \sim 50:50$ (either randomly or from $B$) from these methods at each learning step as in original work by \cite{touati2021learning}.

\paragraph{Rotation-FB.}
After transformer \(f_{\text{dyn}}\) pretraining stage, RFB at each gradient step chooses task-conditioning vector \(z_{\text{FB}}\) based on \textbf{i)} context representation $h$ acting as axes coming from \(f_{\text{dyn}}\) and \textbf{ii)} drawing task encoding vectors \(z_{\text{FB}}\) around this axes. We also perform normalization as in Vanilla-FB by projecting resulting vector on a surface of hypersphere of radius \(\sqrt{d}\).

\textbf{Stage ii)} is implemented as drawing samples as \(z_{\text{FB}} \sim \text{vMF}(\mu = h, \kappa)\). In order to remove high computational costs, we implement this sampling procedure through Householder reflection around context axes, by first drawing \(z\) from one of  the basis vectors (\eg north pole) and then performing rotation.

\subsection{Pseudocode}
\begin{minipage}{1\linewidth}
\vspace{-10pt}
\begin{algorithm}[H]
    \algsetup{linenosize=\small}
    \small
    \caption{Belief-FB Training}
    \begin{algorithmic}[1]
        \STATE \textbf{Input}: offline diverse dataset $\gD$ consisting of transitions based on hidden configuration variable \(c_i\)
        \STATE Initialize transformer encoder \(f_{\mathrm{dyn}_\theta}\), \(F_\eta\), \(B_\omega\), number of gradient steps for transformer pre-training \(K\), context length \(T\), Polyak coefficient, \(\beta\), batch size \(B\)
        learning rates $\lambda_f,\lambda_F,\lambda_B$
        \WHILE{update steps < \(K\)}
            \STATE sample batch of \(B\) trajectories of length \(T\) $\{(s_{i, t}, a_{i, t}, s_{i, t+1})\}_{{i=1, \dots B}, t=1, \dots, T} \sim \gD$\\
        \STATE $    (\boldsymbol\mu_i; \: \log \boldsymbol\sigma_i), = f_{\mathrm{dyn}_\theta}\!\bigl(\{s_{i,t},a_{i,t},s_{i,t+1}\}_{t=1}^{M}\bigr), 
      i=1,\dots,B,$\\
            \STATE \(\boldsymbol{z_i} = \boldsymbol\mu_i 
      + \boldsymbol\epsilon_i \odot \exp\!\bigl(\log\boldsymbol\sigma_i\bigr),\) \\
            \STATE \(\mathbf{Z}_{i,t} = \boldsymbol{z_{\text{dyn}_i}}, \:t=1,\dots,T\) {\color{gray} \; \# Representation \(z_{\text{dyn}}\) is shared across each sequence} \\
            \STATE \(\hat{s}_{i,t+1}\) = \(g_{\text{pred}}(s_{i, t}, a_{i, t}, \mathbf{Z}_{i,t})\) \: $\quad t=1,\dots,T, \; i=1,\dots,B$
            \STATE $\mathcal{L}_{\text{context}}
      \;=\;
      \frac{1}{B\,T}\!
      \sum_{i=1}^{B}\sum_{t=1}^{T}
      \bigl\|
      \hat{s}_{i,t+1} - s_{i,t+1}
      \bigr\|_2^{2}$
            \STATE $\theta_{f_\mathrm{dyn}} \gets \theta_{f_\mathrm{dyn}} - \lambda_f \nabla_{\theta} \mathcal{L}_{\text{context}}(\theta)$ 
        \ENDWHILE
        \WHILE{not converged} 
            \STATE $\eta_F \gets \eta_F - \lambda_F \nabla_{\eta_F} J_{(F,B)}(\eta_F)$
            \STATE $\omega_B \gets \omega_B - \lambda_B \nabla_{\omega_B} J_{(F,B)}(\omega_B)$ 
        \ENDWHILE
    \end{algorithmic}
    \label{alg:algo}
\end{algorithm}
\end{minipage}

\begin{algorithm}[H]
\caption{Sampling \(z_{\text{FB}}\) for RFB}
    \begin{algorithmic}[1]
        \INPUT $B$ (batch size), $d$ (latent dimension),
               anchor matrix $\mathbf{H}\!\in\!\mathbb{R}^{B\times d}$, 
               $\kappa$ (concentration)
        \OUTPUT $\mathbf{Z}\!\in\!\mathbb{R}^{B\times d}$

        \STATE \textbf{Normalize anchors:}\;
               $\mathbf{u}_i \gets \mathbf{H}_i / (\lVert \mathbf{H}_i \rVert_2 + \varepsilon)$
               \COMMENT{for $i = 1,\dots,B$}

        \STATE $\mathbf{S} \gets \textsc{VMF\_Sample\_NorthPole}(B, d, \kappa)$
               \COMMENT{draw $B$ VMF samples}

        \FOR{$i \gets 1$ \textbf{to} $B$}
            \STATE $\mathbf{R}_i \gets \textsc{Householder\_Rotation}(\mathbf{u}_i)$
            \STATE $\mathbf{z}_i \gets \mathbf{R}_i\,\mathbf{S}_i$
        \ENDFOR

        \STATE $\mathbf{Z} \gets \textsc{Project\_To\_Sphere}\bigl(\{\mathbf{z}_i\}_{i=1}^{B}\bigr)$
        \STATE \textbf{return} $\mathbf{Z}$
    \end{algorithmic}
\end{algorithm}

\newpage

\end{document}